\newtheorem{theorem}{Theorem}
\newtheorem{definition}{Definition}
\definecolor{red3}{rgb}{0.80,0.00,0.00}
\newcommand{\syscd}{SySCD\xspace}
\newcommand{\passcode}{PASSCoDe\xspace}
\newcommand{\sdca}{SDCA\xspace}
\newcommand{\alphav}{ {\boldsymbol \alpha}\xspace}
\newcommand{\Dav}{{ \Delta \alphav }\xspace}
\newcommand{\Davk}{{{ \Delta \alphav }_{[k]}}\xspace}
\newcommand{\Davkp}{{{ \Delta \alphav_k' }_{{[p]}}}\xspace}
\newcommand{\vv}{ {\mathbf v}\xspace}
\newcommand{\xv}{ {\mathbf x}\xspace}
\newcommand{\R}{\mathbb {R}\xspace}
\newcommand{\cocoa}{CoCoA\xspace}
\newcommand{\cP}{\mathcal {P}\xspace}
\newcommand{\cB}{\mathcal {B}\xspace}
\newtheorem{remark}{Remark}
\let\maybe@space@\xspace
\newcommand{\CM}[1]{}
\DeclareMathOperator*{\argmin}{arg\,min}
\algnewcommand\algorithmicparfor{\textbf{parfor}}
\algnewcommand\algorithmicpardo{\textbf{do}}
\algnewcommand\algorithmicendparfor{\textbf{end\ parfor}}
\title{SySCD: A System-Aware Parallel \\Coordinate Descent Algorithm}
\author{%
Nikolas Ioannou\thanks{Equal contribution. $^\dagger$Work conducted while at IBM Research, Zurich.}\\
  IBM Research\\
Zurich, Switzerland\\
  \texttt{nio@zurich.ibm.com} 
\And
Celestine Mendler-D{\"u}nner$^{*\dagger}$\\
UC Berkeley\\
Berkeley, California\\
\texttt{mendler@berkeley.edu}
\And
Thomas Parnell\\
  IBM Research\\
Zurich, Switzerland\\
  \texttt{tpa@zurich.ibm.com} 
}
\begin{document}

\maketitle

\begin{abstract}

In this paper we propose a novel parallel stochastic coordinate descent (SCD) algorithm with convergence guarantees that exhibits strong scalability.
We start by studying a state-of-the-art parallel implementation of SCD and identify scalability as well as system-level performance bottlenecks of the respective implementation.
We then take a principled approach to develop a new SCD variant which is designed to avoid the identified system bottlenecks, such as limited scaling due to coherence traffic of model sharing across threads, and inefficient CPU cache accesses.
Our proposed system-aware parallel coordinate descent algorithm (\syscd) scales to many cores and across numa nodes, and offers a consistent bottom line speedup in training time of up to $\times12$ compared to an optimized asynchronous parallel SCD algorithm and up to $\times42$, compared to state-of-the-art GLM solvers (scikit-learn, Vowpal Wabbit, and H2O) on a range of datasets and multi-core CPU architectures.

\end{abstract}


\section{Introduction}
Today's individual machines offer dozens of cores and hundreds of gigabytes of RAM that can, if used efficiently, significantly improve the training performance of machine learning models.
In this respect parallel versions of popular machine learning algorithms such as stochastic gradient descent \citep{Recht2011} and stochastic coordinate descent \citep{liu2015asynchronous, Hsieh2015,richtarik16mp} have been developed.
These methods either introduce asynchronicity to the sequential algorithms, or they use a mini-batch approach, in order to enable parallelization and better utilization of compute resources.
However, all of these methods treat machines as a simple, uniform, collection of cores.
This is far from reality.
While modern machines offer ample computation and memory resources, they are also elaborate systems with complex topologies, memory hierarchies, and CPU pipelines.
As a result, maximizing the performance of parallel training requires algorithms and implementations that are aware of these system-level characteristics and respect their bottlenecks.

\textit{Setup.}
In this work we focus on the training of generalized linear models (GLMs). Our goal is to efficiently solve the following partially separable convex optimization problem using the full compute power available in modern CPUs:
\begin{equation}
\min_{\alphav\in \R^n} F(\alphav)\quad\text{ where }\quad F(\alphav):=f(A\alphav)+\sum_i g_i(\alpha_i).
\label{eq:obj}
\end{equation}
The model $\alphav\in\R^n$ is learned from the training data $A\in\R^{d\times n}$, the function $f$ is convex and smooth, and $g_i$ are general convex functions. The objective \eqref{eq:obj} covers primal as well as dual formulations of many popular machine learning models which are widely deployed in industry \citep{kagglesurvey}. For developing such a system-aware training algorithm we will build on the popular stochastic coordinate descent (SCD) method \citep{Wright2015, sdca2013}. We first identify its performance bottlenecks and then propose algorithmic optimizations to alleviate them.

\paragraph{Contributions.} The main contributions of this work can be summarized as follows: 

\begin{enumerate}[leftmargin=20pt]
\item We propose \syscd, the first system-aware coordinate descent algorithm that is optimized for
\begin{itemize}
\item[--] \textit{cache access patterns:} We introduce \textit{buckets} to design data access patterns that are well aligned with the system architecture.
\item[--] \textit{thread scalability:} We increase data parallelism across worker threads to avoid data access bottlenecks and benefit from the buckets to reduce permutation overheads.
\item[--] \textit{numa-topology:} We design a hierarchical numa-aware optimization pattern that respects non-uniform data access delays of threads across numa-nodes.
\end{itemize}
\item We give convergence guarantees for our optimized method and motivate a \textit{dynamic re-partitioning} scheme to improve its sample efficiency. 
\item We evaluate the performance of \syscd on diverse datasets and across different CPU architectures, and we show that \syscd drastically improves the implementation efficiency and the scalability when compared to state-of-the-art GLM solvers (scikit-learn~\cite{scikit-learn}, Vowpal Wabbit~\cite{vowpal-wabbit}, and H2O~\cite{h2o}), resulting in $\times12$ faster training on average.
\end{enumerate}

\section{Background}
\label{sec:background}

Stochastic coordinate descent (SCD) methods \citep{Wright2015, sdca2013} have become one of the key tools for training GLMs, due to their ease of implementation, cheap iteration cost, and effectiveness in the primal as well as in the dual.
Their popularity has been driving research beyond sequential stochastic solvers and a lot of work has been devoted to map these methods to parallel hardware. We will give a short summary in the following, putting emphasis on the assumptions made on the underlying hardware.

Previous works on \textit{parallel} coordinate descent \citep{Hsieh2015, parnellPARLEARNING17, richtarik16mp, liu2015asynchronous} assume that parallel processes are homogeneous and data as well as model information resides in shared memory which is accessible by all processes.
Building on these assumptions, \cite{Hsieh2015, liu2015asynchronous,doi:10.1137/140961134} propose asynchronous methods for scaling up SCD: the model resides in shared memory and all processes simultaneously read and write this model vector.
A fundamental limitation of such an approach is that its convergence relies on the fact that the model information used to compute each update is not too stale.
Thus, asynchronous algorithms are prone to diverge when scaled up to a large number of processes.
In addition, the heavy load on the model vector can cause significant runtime delays.
Both limitations are more pronounced for dense data, thus we use a dense synthetic dataset to illustrate these effects in Fig~\ref{fig:mot:partrain:x86}; the orange, dashed line shows that convergence suffers from staleness, the gray line shows the respective runtime assuming perfect thread scalability and the yellow lines depicts the measured runtime. 
The algorithm diverges when scaled across more than 8 threads. 
Taking another route, \cite{richtarik16mp,bradley2011} propose a synchronous approach for parallelizing SCD.
Such methods come with more robust convergence properties.
However, depending on the inherent separability of $f$, the potential of acceleration can be small.
For synthetic, well separable problems, mini-batch SDCA proposed by \cite{richtarik16mp} show almost linear scaling, whereas for correlated objectives or dense datasets, the potential for acceleration, as given in their theory diminishes.
In addition, updates to the shared vector in the synchronous setting are guaranteed to conflict across parallel threads -- mini-batch SDCA uses atomic operations\footnote{code available at \url{https://code.google.com/archive/p/ac-dc/downloads}} to serialize those updates; this does not scale as the thread count increases, and especially not in numa machines.
We have applied this method to the same synthetic example used in Fig~\ref{fig:mot:example} and we observed virtually no speedup ($5\%$) when using $32$ threads.

\begin{figure*}[t!]
\centering
  \subfloat[\passcode] {
    \includegraphics[width=0.48\linewidth]{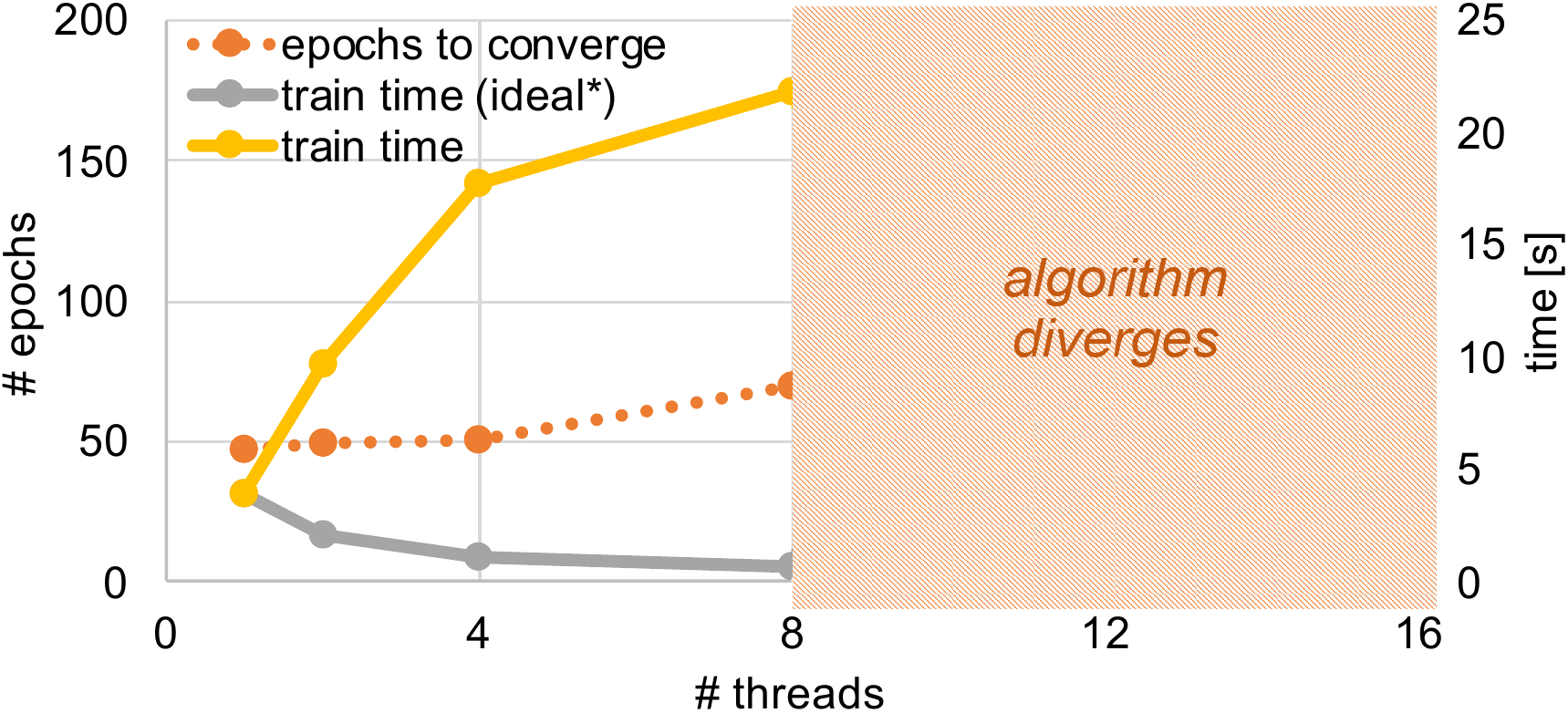}
    \label{fig:mot:partrain:x86}
  }
  \subfloat[\cocoa]{
    \includegraphics[width=0.48\linewidth]{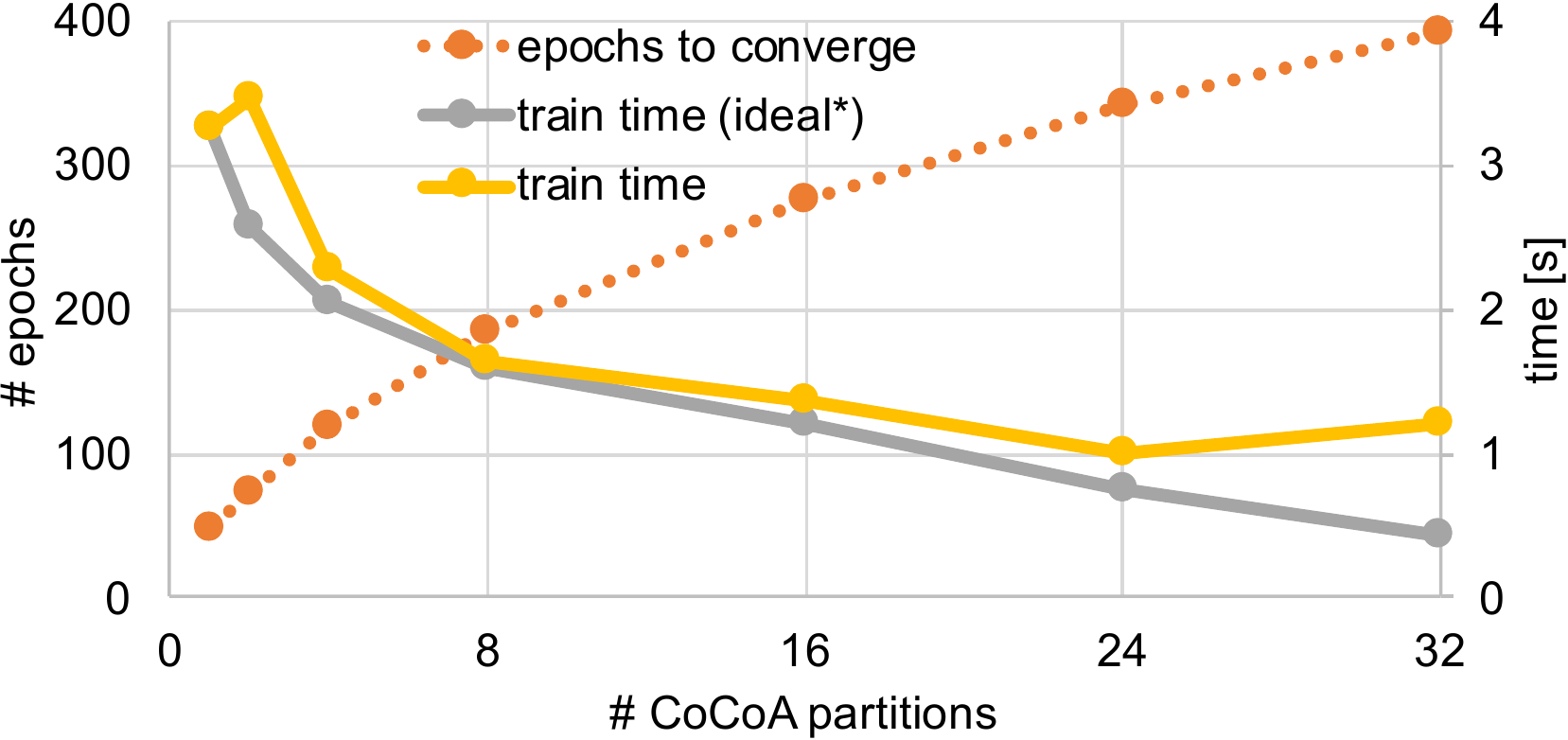}
    \label{fig:mot:disttrain:x86}
  }
  \caption{Scalability of existing methods: Training of a logistic regression classifier on a synthetic dense dataset with $100$k training examples and $100$ features -- (a) training using \passcode-wild \citep{Hsieh2015} and (b) training using \cocoa \citep{cocoa18jmlr} deployed across threads. Details can be found in the appendix.}
  \label{fig:mot:example}
\end{figure*}

Orthogonal to parallel methods, \textit{distributed} coordinate-based methods have also been the focus of many works, including  \citep{NIPS20135114,cocoa,richtarik16jmlr,adn18icml, cocoa18jmlr, lee2017distributed}.
Here the standard assumption on the hardware is that  processes are physically separate, data is partitioned across them, and communication is expensive.
To this end, state-of-the-art distributed first- and second-order methods attempt to pair good convergence guarantees with efficient distributed communication.
However, enabling this often means trading convergence for data parallelism \citep{elasticcocoa18}.
We have illustrated this tradeoff in Fig~\ref{fig:mot:disttrain:x86} where we employ \cocoa \cite{cocoa18jmlr} across threads; using 32 threads the number of epochs is increased by $\times8$ resulting in a speedup of $\times4$ assuming perfect thread scalability.
This small payback makes distributed algorithms generally not well suited to achieving acceleration; they are primarily designed to enable training of large datasets that do not fit into a single machine \citep{cocoa18jmlr}.

The fundamental trade-off between statistical efficiency (how many iterations are needed to converge) and hardware efficiency (how efficient they can be executed) of deploying machine learning algorithms on modern CPU architectures has previously been studied in \cite{Zhang-vldb-2014}. The authors identified data parallelism as a critical tuning parameter and demonstrate that its choice can significantly affect performance of any given algorithm.

The goal of this work is to go one step further and enable better trade-offs by directly incorporate mitigations to critical system-level bottlenecks into the algorithm design. We exploit the shared memory performance available to worker threads within modern individual machines to enable new algorithmic features that improve scalability of parallel coordinate descent, while at the same time maintaining statistical efficiency.


\section{Bottleneck Analysis}
\label{sec:baseline}

We start by analyzing state-of-the-art implementations of sequential and parallel coordinate descent to identify bottlenecks and scalability issues.
For the parallel case, we use an optimized implementation of \passcode \citep{Hsieh2015} as the baseline for this study, which is vectorized and reasonably efficient.
The parallel algorithm operates in epochs and repeatedly divides the $n$ shuffled coordinates among the $P$ parallel threads.
Each thread then operates asynchronously: reading the current state of the model $ \alphav$, computing an update for this coordinate and writing out the update to the model $\alpha_j$ as well as the shared vector $\mathbf v$.
The auxiliary vector $\mathbf v:=A\alphav$ is kept in memory to avoid recurring computations.
Write-contention on $\vv$ is solved opportunistically in a wild fashion, which in practice is the preferred approach over expensive locking~\citep{parnellPARLEARNING17, Hsieh2015}.
The parallel SCD algorithm is stated in  Appendix~\ref{app:algo} for completeness.

One would expect that, especially for large datasets (e.g., datasets that do not fit in the CPU caches), the runtime would be dominated by (a) the time to compute the inner product required for the coordinate update computation and (b) retrieving the data from memory. While these bottlenecks can generally not be avoided, 
our performance analysis identified four other bottlenecks that in some cases vastly dominate the runtime:

\textbf{(B1) Access to model vector.} When the model vector $\alphav$ does not fit in the CPU cache, a lot of time is spend in accessing the model. The origin of this overhead is the random nature of the accesses to $\alphav$, there is very little cache line re-use: a cache line is brought from memory (64B or 128B), out of which only 8B are used. This issue affects both the parallel and the sequential implementation. For the latter this bottleneck dominates and we found that, by accessing the model in a sequential manner, we can reduce the runtime by $\times 2$.

\textbf{(B2) Access to the shared vector.} For the parallel implementation, we found that writing the updates to the shared vector $\vv$ across the different threads was the main bottleneck. On top of dominating the runtime, staleness in the shared vector can also negatively impact convergence.

\textbf{(B3) Non-uniform memory access.} When the parallel implementation is deployed across multiple numa nodes, bottleneck (B2) becomes catastrophic, often leading to divergence of the algorithm (see Fig.~\ref{fig:mot:partrain:x86}).
This effect can be explained by the fact that the inter-node delay when writing updates is far more pronounced than the intra-node delay.

\textbf{(B4) Shuffling coordinates.} A significant amount of time is spent permuting the coordinates before each epoch in both the parallel and the sequential case. For the latter, we found that by removing the permutation, effectively performing cyclic coordinate descent, we could achieve a further $20\%$ speed-up in runtime on top of removing (B1).


\begin{algorithm}[t]
  \begin{algorithmic}[1]
    \small
\State \textbf{Input:} Training data matrix $A=[\mathbf x_1, ... , \mathbf x_n]\in \mathbb{R}^{d\times n }$
	\State Initialize model $\boldsymbol \alpha$ and shared vector ${\mathbf v}=\sum_{i=1}^n \alpha_i {\mathbf x}_i$.
	\State Partition coordinates into buckets of size $B$.
	\State Partition buckets across numa nodes according to $\{\cP_k\}_{k=1}^K$.
	\For {$t=1,2,\ldots,T_1$}
		\ParFor {$k=1,2,\ldots,K$ across numa nodes}
		\State $\vv_k=\vv$
			\For {$t=1,2,\ldots,T_2$}
				\State create random partitioning of local buckets across threads $\{\cP_{k,p}\}_{p=1}^P$
				\ParFor {$p=1,2,\ldots,P$ across threads}
				\State $\vv_p = \vv_k$
					\For {$j=1,2,\ldots, T_3$}
						\State randomly select a bucket $\cB\in\cP_{k,p}$
						\For {$i=1,2,\ldots,T_4$}
							\State {randomly sample a coordinate $j$ in bucket $\cB$}
							\State $\delta = \argmin_{\delta \in\mathbb {R}} \bar f( {\mathbf v_p} + {\mathbf x}_j \delta)+\bar g_j( \alpha_j+\delta)$
							\State $\alpha_j = {\alpha}_j + \delta$
							\State $\vv_p = \vv_p+ \delta \xv_{j}$
						\EndFor
					\EndFor
				\EndParFor
				\State $\vv_k = \vv_k+ \sum_p (\vv_p-\vv_k) $
			\EndFor
		\EndParFor
		\State $\vv = \vv + \sum_k (\vv_k-\vv)$
	\EndFor
\end{algorithmic}
\caption{\small \syscd for minimizing \eqref{eq:obj}}
\label{alg:syscd}
\end{algorithm}


\section{Algorithmic Optimizations}
\label{sec:cpu-opts}
In this section we present the main algorithmic optimizations of our new training algorithm which are designed to simultaneously address the system performance bottlenecks (B1)-(B4) identified in the previous section as well as the scalability issue demonstrated in Fig.~\ref{fig:mot:disttrain:x86}.
Our system-aware parallel training algorithm (\syscd) is summarized in Alg.~\ref{alg:syscd} and its convergence properties are analyzed in Sec.~\ref{sec:proof}.
The following subsections will be accompanied by experimental results illustrating the effect of the individual optimizations. They show training of a logistic regression classifier on the criteo-kaggle dataset~\citep{criteo} on a 4 node system with 8 threads per numa node (for the experimental setup, see Sec~\ref{sec:eval}) .
Results for two additional datasets can be found in the appendix.

\subsection{Bucket Optimization}
\label{subsec:buckets}
We have identified the cache line access pattern (B1) and the random shuffling computation (B4) as two critical bottlenecks in the sequential as well as the parallel coordinate descent implementation.
To alleviate these in our new method, we introduce the concept of \textit{buckets}:
We partition the coordinates and the respective columns $\mathbf x_i$ of $A$  into buckets and then train a bucket of $B$ consecutive coordinates at a time.
Thus, instead of randomizing all coordinates at once, the order in which buckets are processed is randomized, as well as the order of coordinates within a bucket.
This modification to the algorithm improves performance in several ways; (i) the model vector $ \alphav$ is accessed in a cache-line efficient manner, (ii) the computation overhead of randomizing the coordinates is reduced by $1/B$, and (iii) CPU prefetching efficiency on accessing the different coordinates of $\xv_i$ is implicitly improved.
For our test case this optimization leads to an average speedup of $63\%$ with only a small toll on convergence, as depicted in Fig.~\ref{fig:eval:bucketsize:criteo:x86}.

The bucket size $B$ will appear in our convergence rate (Theorem \ref{thm:rate}) and can be used to control the scope of the randomization to trade-off  between convergence speed and implementation efficiency. We illustrate the sensitivity of our algorithm to the bucket size $B$ in Fig.~\ref{fig:eval:bucket:sensitivity:criteo:8t:x86}. 
We see that the bottom line training time decreases significantly across the board by introducing buckets. The optimal bucket size in Fig.~\ref{fig:eval:bucket:sensitivity:criteo:8t:x86} is eight 
which coincides with the cache line size of the CPU with respect to the model vector $\mathbf \alpha$ accesses. 
Thus we do not need to introduce an additional hyperparameter and can choose the bucket size $B$ at runtime based on the cache line size of the CPU, using linux \texttt{sysfs}.

\begin{figure}[t!]
\begin{minipage}[t]{.48\textwidth}
  \centering
\includegraphics[width = \linewidth]{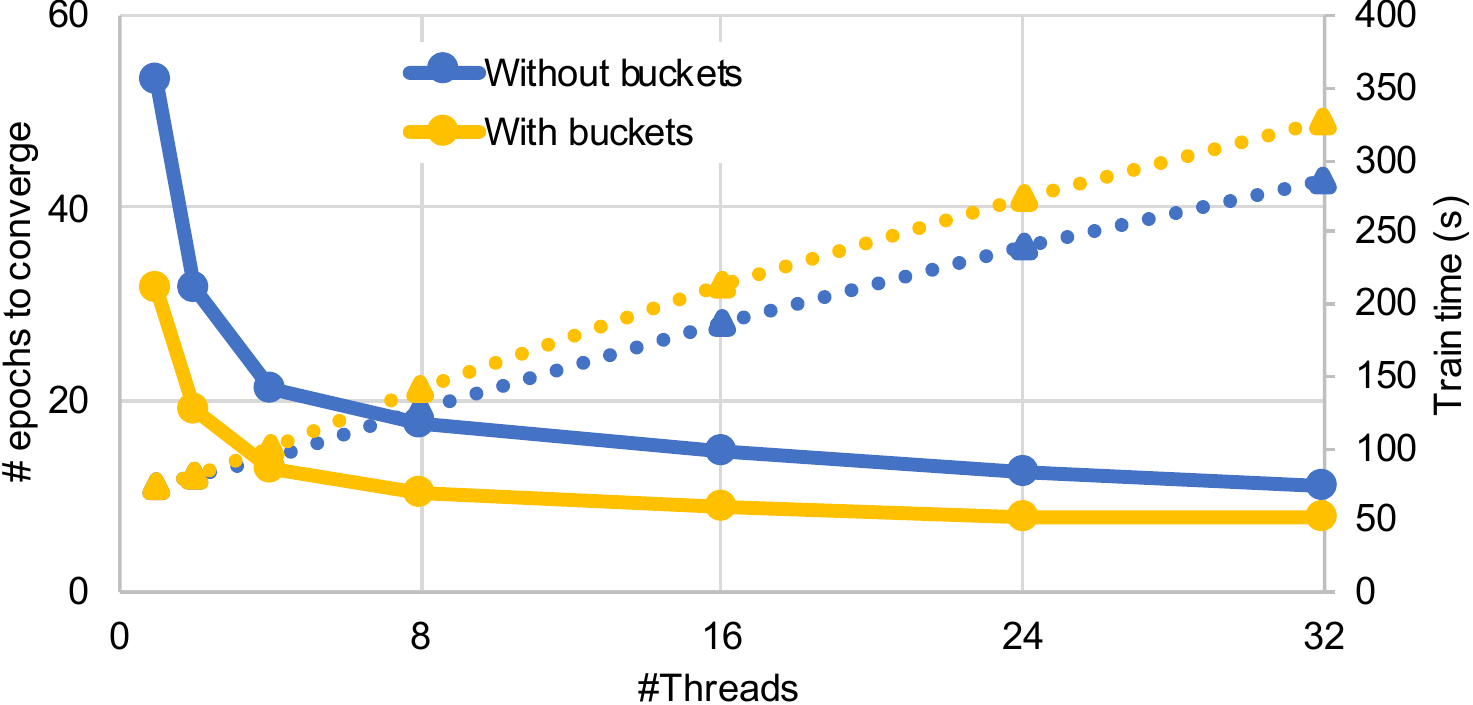}
   \caption{Bucket Optimization:  Gain achieved by using buckets. Solid lines indicate time, and dashed-lines depict number of epochs.}
   \label{fig:eval:bucketsize:criteo:x86}
\end{minipage}
\hfill
\begin{minipage}[t]{.48\textwidth}
  \centering
  \includegraphics[width= \linewidth]{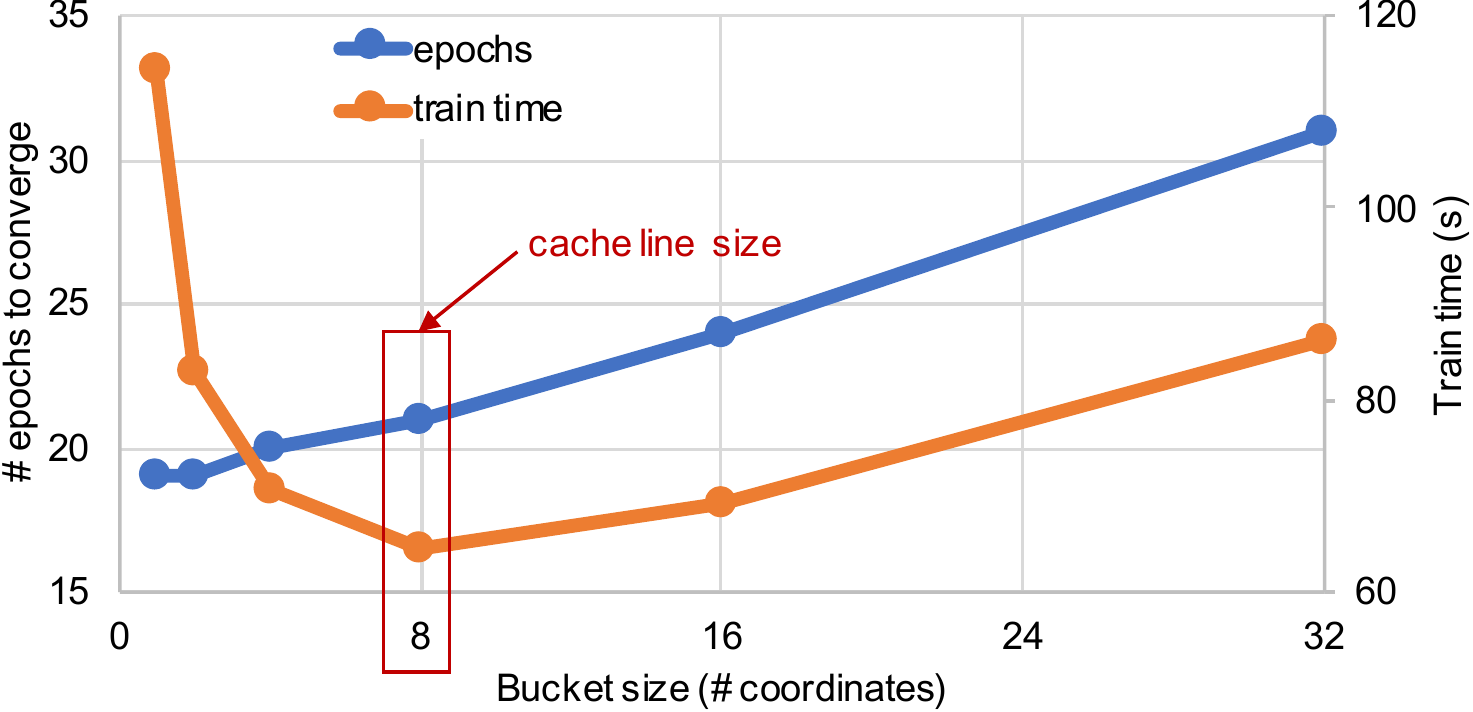}
   \caption{Sensitivity analysis on the bucket size w.r.t. training time and epochs for convergence.}
   \label{fig:eval:bucket:sensitivity:criteo:8t:x86}
   \end{minipage}
\vspace{-0.2cm}
\end{figure}

\subsection{Increasing Data Parallelism}
\label{subsec:data-par}

Our second algorithmic optimization mitigates the main scalability bottleneck (B2) of the asynchronous implementation:  write-contention on the shared vector $\vv$.
We completely avoid this write-contention by replicating the shared vector across threads to increase data parallelism.
To realize this data parallelism algorithmically we transfer ideas from distributed learning.
In particular, we employ the \cocoa method \citep{cocoa18jmlr} and map it to a parallel architecture where we partition the (buckets of) coordinates across the threads and replicate the shared vector in each one. 
The global shared vector is therefore only accessed at coarse grain intervals (e.g., epoch boundaries), where it is updated based on the replicas and broadcasted anew to each thread.
Similar to \cocoa we can exploit the typical asymmetry of large datasets and map our problem such that the shared vector has dimensionality $d=\min(\# \text{features},\# \text{examples})$.

We have seen in Sec~\ref{sec:background} that distributed algorithms such as \cocoa are generally not suited to achieve significant acceleration with parallelism.
This behavior of distributed methods is caused by the static partitioning of the training data across workers which increases the epochs needed for convergence \citep{cocoa18jmlr,elasticcocoa18} (e.g., see Fig~\ref{fig:mot:disttrain:x86}).
To alleviate this issue, we propose to combine our multi-threaded implementation with a \textit{dynamic re-partitioning} scheme.
That is, we shuffle all the (buckets of) coordinates at the beginning of each local optimization round (Step 9 of Alg.~\ref{alg:syscd}), and then, each thread picks a different set of buckets each time.
Such a re-partitioning approach is very effective for convergence when compared to a default static partitioning, as depicted in Fig.~\ref{fig:eval:shuffle:criteo:x86}.
It reduces the number of epochs by $54\%$ at the cost of only a small implementation overhead.
To the best of our knowledge we are the first to consider such a re-partitioning approach in combination with distributed methods and demonstrate a practical scenario where it pays off -- in a classical distributed setting the cost of re-partitioning would be unacceptable.

\begin{figure}[t!]
\begin{minipage}[t]{.48\textwidth}
  \centering
    \includegraphics[width=0.95\linewidth]{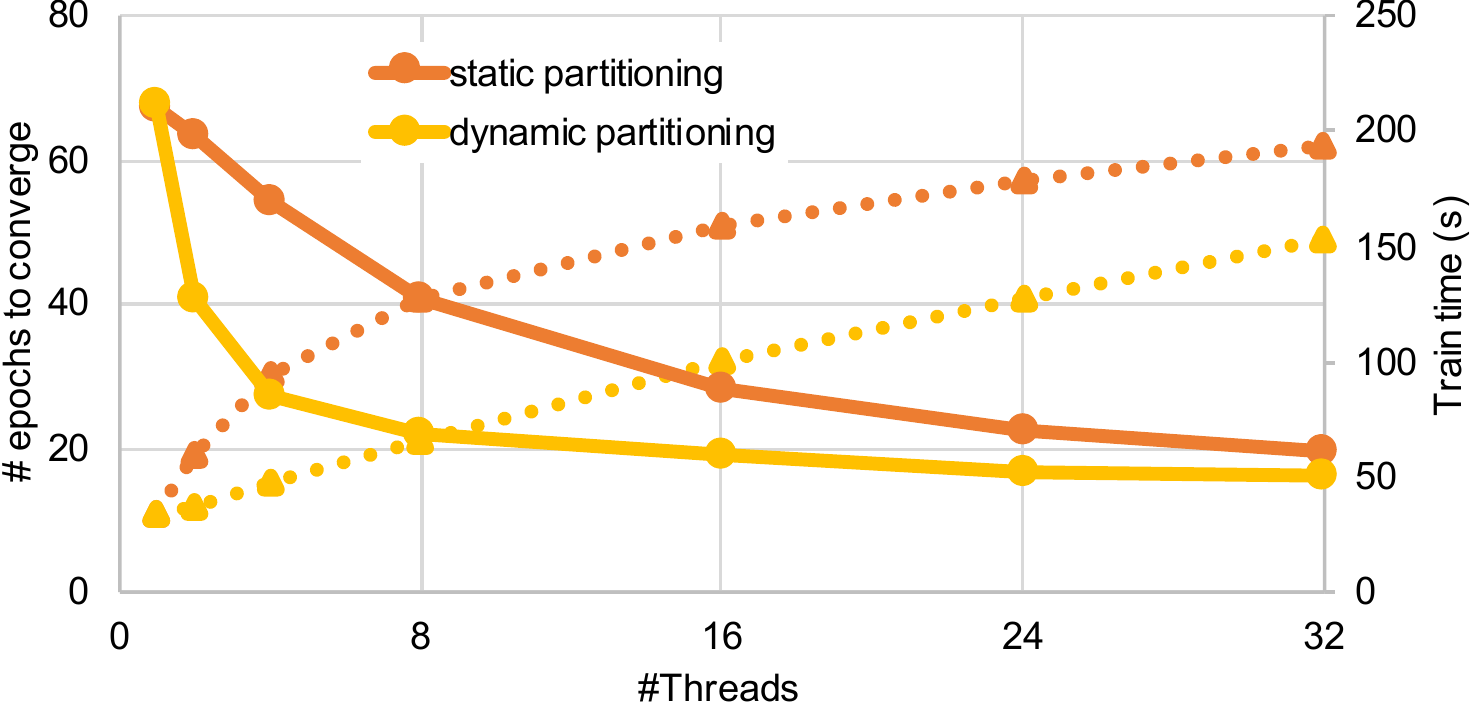}
    \caption{Static and dynamic partitioning: Gain achieved by dynamic re-partitioning. Solid lines indicate time, and dashed-lines depict number of epochs.}
     \label{fig:eval:shuffle:criteo:x86}
     \end{minipage}
\hfill
\begin{minipage}[t]{.48\textwidth}
  \centering
   \includegraphics[width=0.95\linewidth]{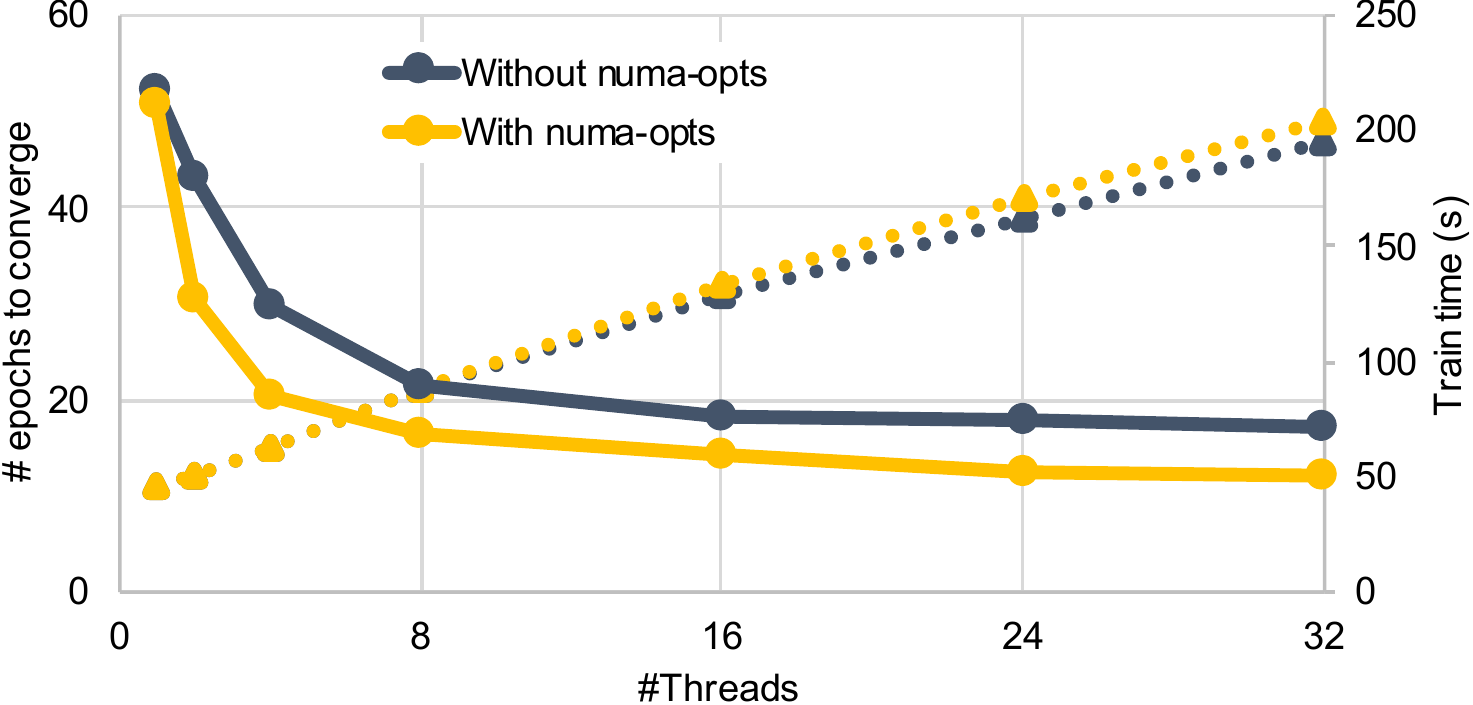}
    \caption{Numa-level Optimizations:  Gain achieved by numa-awareness. Solid lines indicate time, and dashed-lines depict number of epochs.}
     \label{fig:eval:numa:criteo:x86}
     \end{minipage}
\vspace{-0.2cm}
\end{figure}

The intuition behind this approach is the following:  In \cocoa \citep{cocoa18jmlr} a block-separable auxiliary model of the objective is constructed. In this model the correlation matrix $M=A^\top A$ is approximated by a block-diagonal version where the blocks are aligned with the partitioning of the data. This allows one to decouple local optimization tasks.
However, this also means that correlations between data points on different worker nodes are not considered.
A dynamic re-partitioning scheme has the effect of choosing a different block diagonal approximation of $M$ in each step.
By randomly re-partitioning coordinates, the off-diagonal elements of $M$ are sampled uniformly at random and thus in expectation a good estimate of $M$ is used. A rigorous analysis of this effect would be an interesting study for future work.
However, note that \syscd inherits the strong convergence guarantees of the \cocoa method, independent of the partitioning, and can thus be scaled up safely to a large number of cores in contrast to our asynchronous reference implementation.

\subsection{Numa-Level Optimizations}
\label{subsec:numa-opt}
Subsequently, we focus on optimizations related to the numa topology in a multi-numa node system.
Depending on the numa node where the data resides and the node on which a thread is running, data access performance can be non-uniform across threads.
As we have seen in Fig.~\ref{fig:mot:disttrain:x86} and discussed in Sec.~\ref{sec:baseline} this amplifies bottleneck (B3).
To avoid this in \syscd, we add an additional level of parallelism and treat each numa node as an independent training node, in the distributed sense. We then deploy a hierarchical scheme: we statically partition the buckets across the numa nodes, and within the numa nodes we use the dynamic re-partitioning scheme introduced in Sec~\ref{subsec:data-par}.
We exploit the fact that the training dataset is read-only and thus it does not incur expensive coherence traffic across numa nodes. 
We do not replicate the training dataset across the nodes and the model vector $\alphav$ is local to each node which holds the coordinates corresponding to its partition $\cP_k$.
Crucially, each node holds its own replica of the shared vector, which is reduced across nodes periodically. The frequency of synchronization can be steered in Alg.~\ref{alg:syscd} by balancing the total number of updates between $T_1$ and $T_2$. This again offers a trade off between fast convergence (see Theorem \ref{thm:rate}) and implementation efficiency.
This hierarchical optimization pattern that reflects the numa-topology results in a speedup of $33\%$ over a numa-oblivious implementation, as shown in Fig~\ref{fig:eval:numa:criteo:x86}.
To avoid additional hyperparameters, we dynamically detect the numa topology of the system, as well as the number of physical cores per node, using \texttt{libnuma} and the \texttt{sysfs} interface. 
If the number of threads requested by the user is less or equal to the number of cores in one node, we schedule a single node solver.
We detect the numa node on which the dataset resides using the \texttt{move\_pages} system call.


\subsection{Convergence Analysis}
\label{sec:proof}
We derive an end-to-end convergence rate for \syscd with all its optimizations as described in Alg.~\ref{alg:syscd}. We focus on strongly convex $g_i$ while every single component of \syscd is also guaranteed to converge for general convex $g_i$, see Remark~\ref{remark:nonsc} in the Appendix.

\begin{theorem}
Consider Algorithm \ref{alg:syscd} applied to \eqref{eq:obj}. Assume $f$ is $\gamma$-smooth and $g_i$ are $\mu$-strongly convex functions. Let  $K$ be the number of numa nodes and $P$ the number of threads per numa node. Let $B$ be the bucket size. 
Denote $T_4$ the number of SDCA updates performed on each bucket, let $T_3$ be the number of buckets processed locally in each iteration and let $T_2$ be the number of communication rounds performed independently on each numa node before global synchronization. Then, after $T_1$ outer rounds the suboptimality $\varepsilon = F(\alphav)-\min_\alphav F(\alphav)$ can be bounded as
\begin{eqnarray}
\mathrm E [ {\varepsilon}]\leq\left( 1- \left[ 1-\left(1-(1-\theta)\frac{\gamma K c_A + \mu}{\gamma K P c_A + \mu}\right)^{T_2}\right]\frac \mu {\mu+K\gamma c_A}\right)^{T_1} \varepsilon_0
\label{eq:rate}
\end{eqnarray}
where $c_A:=\|A\|_{op}$ and 
\begin{equation}\theta =\left(1-\left[1-  \left( 1- \frac 1 n \frac {\mu}{\mu  + \gamma K P } \right)^{T_4}\right] \frac B n \frac {\mu}{\mu + c_A\gamma KP} \right)^{T_3}.
\end{equation}
\label{thm:rate}
\end{theorem}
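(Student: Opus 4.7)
The plan is to apply the CoCoA$^+$ convergence framework recursively at two levels of parallelism (across numa nodes, and across threads within a numa node) and to separately analyse the per-thread local solver, which consists of $T_3$ uniformly-sampled bucket iterations each running $T_4$ SDCA updates. The three analyses then compose to give the nested expression in (\ref{eq:rate}).

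First, I would analyse the thread-level local solver on lines 12--19 of Algorithm~\ref{alg:syscd}. In a single outer numa-round, each thread is assigned a subset of buckets; its objective is a CoCoA$^+$ local subproblem whose effective smoothness is $\gamma K P c_A$, arising from two nested block-diagonal approximations (the outer CoCoA with separability parameter $\sigma'\le K$ and the inner CoCoA with $\sigma'\le P$). Each bucket step selects one bucket uniformly at random and applies $T_4$ SDCA sweeps to its $B$ coordinates; the strongly-convex SDCA contraction on this restricted problem yields the factor $(1-\tfrac{1}{n}\tfrac{\mu}{\mu+\gamma KP})^{T_4}$, and the uniform bucket sampling contributes an additional $B/n$ expected-progress multiplier on any fixed coordinate together with the CoCoA weighting $\tfrac{\mu}{\mu+c_A \gamma KP}$. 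Telescoping over the $T_3$ bucket iterations and taking expectations over the random bucket choices gives exactly the local-accuracy expression $\theta$ in the theorem.

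Second, I would plug this local accuracy $\theta$ into the CoCoA$^+$ strongly-convex convergence theorem at the numa-node level. Within a node the $P$ threads work in parallel, each solving its subproblem to relative accuracy $\theta$ and then aggregating its partial shared-vector update $(\vv_p-\vv_k)$ with unit weight in line 21. With $P$ workers and the outer-level effective smoothness $\gamma K c_A$ inherited from the outer CoCoA partition, one iteration of numa-level CoCoA$^+$ contracts the node-subproblem suboptimality by $1-(1-\theta)\tfrac{\gamma K c_A+\mu}{\gamma K P c_A+\mu}$. Iterating for $T_2$ rounds produces the middle bracketed expression inside (\ref{eq:rate}). Finally, I would invoke CoCoA$^+$ a second time at the outer level, with the $K$ numa nodes as workers and with local accuracy obtained from the previous step. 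The strongly-convex CoCoA$^+$ rate delivers a contraction factor involving $\tfrac{\mu}{\mu+\gamma K c_A}$ per outer round, and raising to the $T_1$-th power yields the claimed bound.

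The main technical obstacle will be bookkeeping across the three nested levels: tracking how the effective smoothness scales from $\gamma c_A$ to $\gamma K c_A$ to $\gamma K P c_A$ under nested block-diagonal approximations; verifying that the safe aggregation parameters $\sigma'$ at both CoCoA levels are indeed the worker counts, which requires that the unit-weighted aggregations in lines 21 and 24 correspond to the ``adding'' aggregation mode of CoCoA$^+$ and that the local auxiliary models are constructed with the matching $\sigma'$; and carefully combining the $B/n$ bucket-sampling factor with the SDCA contraction, since this requires bounding the expected improvement of the CoCoA auxiliary model under a two-stage random sampling (bucket, then coordinate within the bucket) rather than the standard uniform single-coordinate sampling of SDCA.
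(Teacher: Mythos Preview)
Your proposal is correct and mirrors the paper's own proof almost exactly: the paper also analyses Algorithm~\ref{alg:syscd} as a two-level nested \cocoa (outer across numa nodes, inner across threads) and treats the per-thread solver as randomized block coordinate descent over buckets with SDCA as the block-update routine, then composes the three rates with the same smoothness rescaling $\bar\gamma = KP\gamma$, $\bar\mu=\mu$. The only cosmetic difference is that the paper invokes existing theorems (hierarchical \cocoa from \cite{snapml18nips}, block CD from \cite{duhl2017}, SDCA from \cite{sdca2013}) as black boxes rather than re-deriving the $B/n$ sampling factor and the aggregation bookkeeping you flag as obstacles.
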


\begin{proof}[Proof Sketch]
To derive a convergence rate of Alg.~\ref{alg:syscd} we start at the outer most level. We focus on the two nested for-loops in Step 6 and Step 10 of Alg.~\ref{alg:syscd}. 
They implement a nested version of \cocoa  where the outer level corresponds to \cocoa across numa nodes and the inner level to \cocoa across threads.
The number of inner iterations $T_2$ is a hyper-parameter of our algorithm steering the accuracy to which the local subproblem assigned to each numa node is solved. 
Convergence guarantees for such a scheme can be derived from a nested application of \cite[Theorem 3]{cocoa18jmlr} similar to \cite[Appendix B]{snapml18nips}.
Subsequently, we combine this result with the convergence guarantees of the local solver used by each thread. 
This solver, implementing the bucketing optimization, can be analyzed as a randomized block coordinate descent method, similar to \cite[Theorem 1]{duhl2017}, where each block corresponds to a bucket of coordinates. Each block update is then computed using SDCA \citep{sdca2013}. 
Again, the number of coordinate descent steps $T_4$ forms a hyper-parameter to steer the accuracy of each block update. 
Combining all these results in a nested manner yields the convergence guarantee presented in Theorem \ref{thm:rate}. We refer to the Appendix \ref{app:proof} for a detailed proof.
\end{proof}

%

\section{Evaluation}
\label{sec:eval}
In this section, we evaluate the performance of \syscd in two different single-server multi numa-node environments.
We first analyze the scalability of our method and the performance gains achieved over the reference implementation. 
Then, we compare \syscd with other state-of-the-art GLM solvers available in scikit-learn~\citep{scikit-learn}(0.19.2), H2O~\citep{h2o} (3.20.0.8), and Vowpal Wabbit (VW)~\citep{vowpal-wabbit} (commit: {\small\texttt{5b020c4}}).
We take logistic regression with $L_2$ regularization as a test case.
We use two systems with different CPU architectures and numa topologies: a 4-node Intel Xeon (E5-4620) with 8 cores and 128GiB of RAM in each node, and a 2-node IBM POWER9 with 20 cores and 512GiB in each node, 1TiB total.
We evaluate on three datasets: (i) the sparse dataset released by Criteo Labs as part of their 2014 Kaggle competition \citep{criteo} (criteo-kaggle), (ii) the dense HIGGS dataset~\citep{higgs14nature} (higgs), and (iii) the dense epsilon dataset from the PASCAL Large Scale Learning Challenge \citep{epsilondataset} (epsilon). Results on epsilon and additional details can be found in the appendix.

\vspace{0.2cm}

\begin{remark}[Hyperparameters]
The hyperparameters $T_2, T_3,T_4$ in Alg~\ref{alg:syscd} can be used to optimally tune \syscd to different CPU architectures. However, a good default choice is
\begin{align}
T_4=B,\quad\quad T_3=\frac{n}{PBK} \quad \quad T_2 = 1
\end{align} 
such that one epoch ($n$ coordinate updates) is performed across the threads before each synchronization step. We will use these values for all our experiments and did not further tune our method. Further, recall that the bucket size $B$ is set to be equal to the cache line size of the CPU and the number of numa nodes $K$ as well as the number of threads $P$ is automatically detected.
\end{remark}

%
%

\subsection{Scalability}
\label{eval:time-to-acc}
We first investigate the thread scalability of \syscd. Results, showing the speedup in time per epoch (an epoch corresponds to $n$ coordinate updates) over the sequential version, are depicted in Fig~\ref{fig:scaling}. We see that \syscd scales almost linearly across the two systems and thus the main scalability bottleneck (B2) of our reference implementation is successfully mitigated.
The 4 node system show a slightly lower absolute speedup beyond 1-node (8 threads), which is expected due to the higher overhead when accessing memory on different numa nodes compared to the 2-node system.

  \begin{figure}[t!]
\captionsetup[subfloat]{captionskip=0.3cm}
  \subfloat[higgs] {
\includegraphics[width=.48\linewidth]{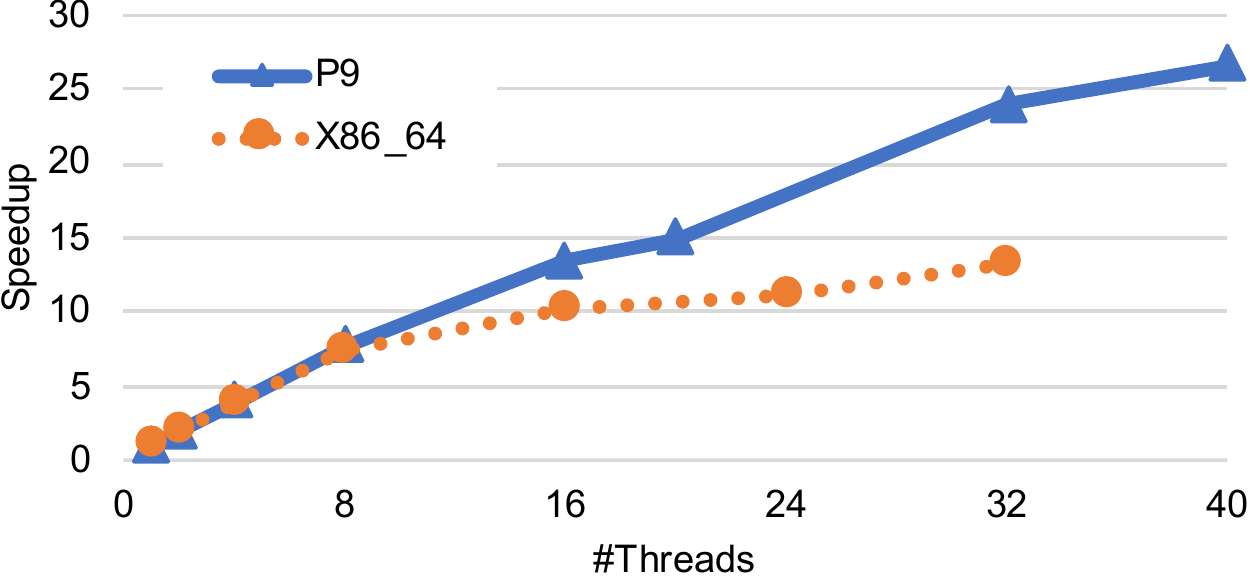}
     \label{fig:mot:dense:scaling:x86}
  }
  \subfloat[criteo-kaggle] {
\includegraphics[width=.48\linewidth]{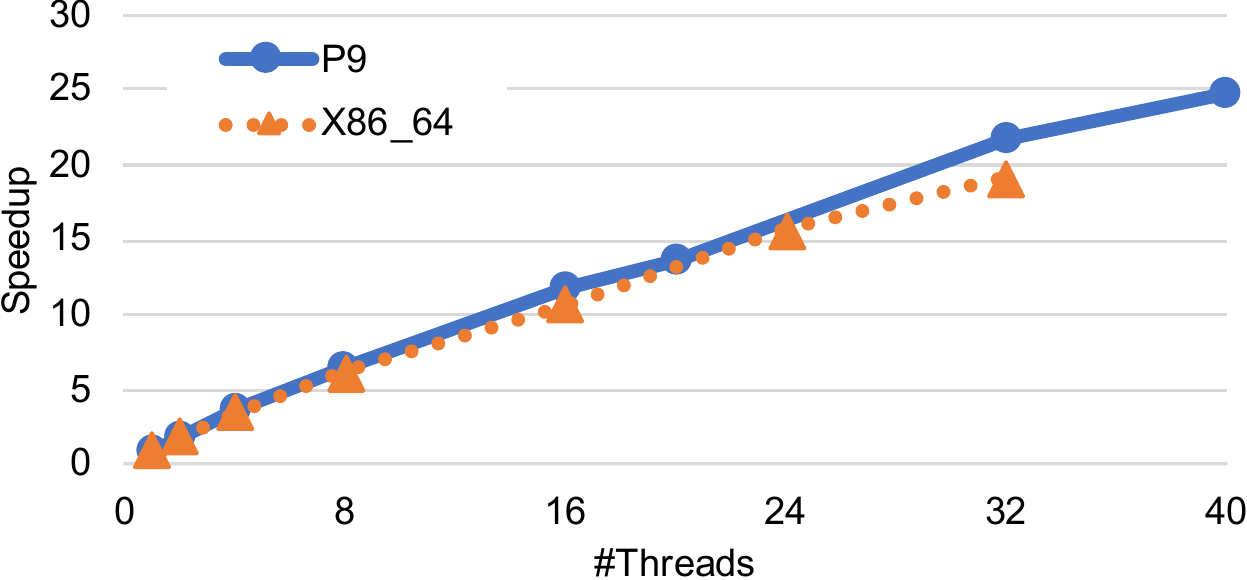}
     \label{fig:mot:dense:scaling:x86:criteo}
  }  
    \caption{Strong thread scalability of \syscd  w.r.t runtime per epoch with increasing thread counts for the two different datasets and systems: a 2 node P9 machine (blue) and a 4 node X86\_64 machine (orange). }
    \label{fig:scaling}
  \end{figure}

Note that in our experiments we disable simultaneous multi-threading (SMT), since in practice we often find enabling SMT leads to worse overall performance. 
Therefore, the maximal thread count corresponds to the number of physical cores present in the machine.
In order to illustrate how \syscd scales when the number of threads exceeds the number of physical cores, we enabled SMT4 (4 hardware threads per core) on the P9 machine and re-ran the experiment from Fig.~\ref{fig:mot:dense:scaling:x86:criteo}.
The results are shown in Figure \ref{fig:scaling_smt4} in the appendix.
As expected, we see linear scaling up to the number of physical CPU cores (in this case 40), after which we start to see diminishing returns due to the inherent inefficiency of SMT4 operation. We thus recommend disabling SMT when deploying \syscd.

\subsection{Bottom Line Performance}
\label{eval:time-per-epoch}
Second, we compare the performance of our new \syscd algorithm to the \passcode baseline implementation.  Convergence is declared if the relative change in the learned model across iterations is below a threshold. We have verified that all implementations exhibit the same test loss after training, apart from the \passcode implementation which can converge to an incorrect solution when using many threads~\citep{passcode}.
Fig~\ref{fig:tconv} illustrates the results for two different systems. 
Comparing against \passcode with the best performing thread count, \syscd achieves a speedup of $\times 5.4$ (P9) and $\times 4.8$ (X86\_64) on average across datasets.
The larger performance improvement observed for the 2-node system relative to the 4-node system, in particular on the higgs dataset, can be attributed to the increased memory bandwidth.

\begin{figure}[t!]
\captionsetup[subfloat]{captionskip=0.3cm}
  \subfloat[higgs]{
    \includegraphics[width=.48\linewidth]{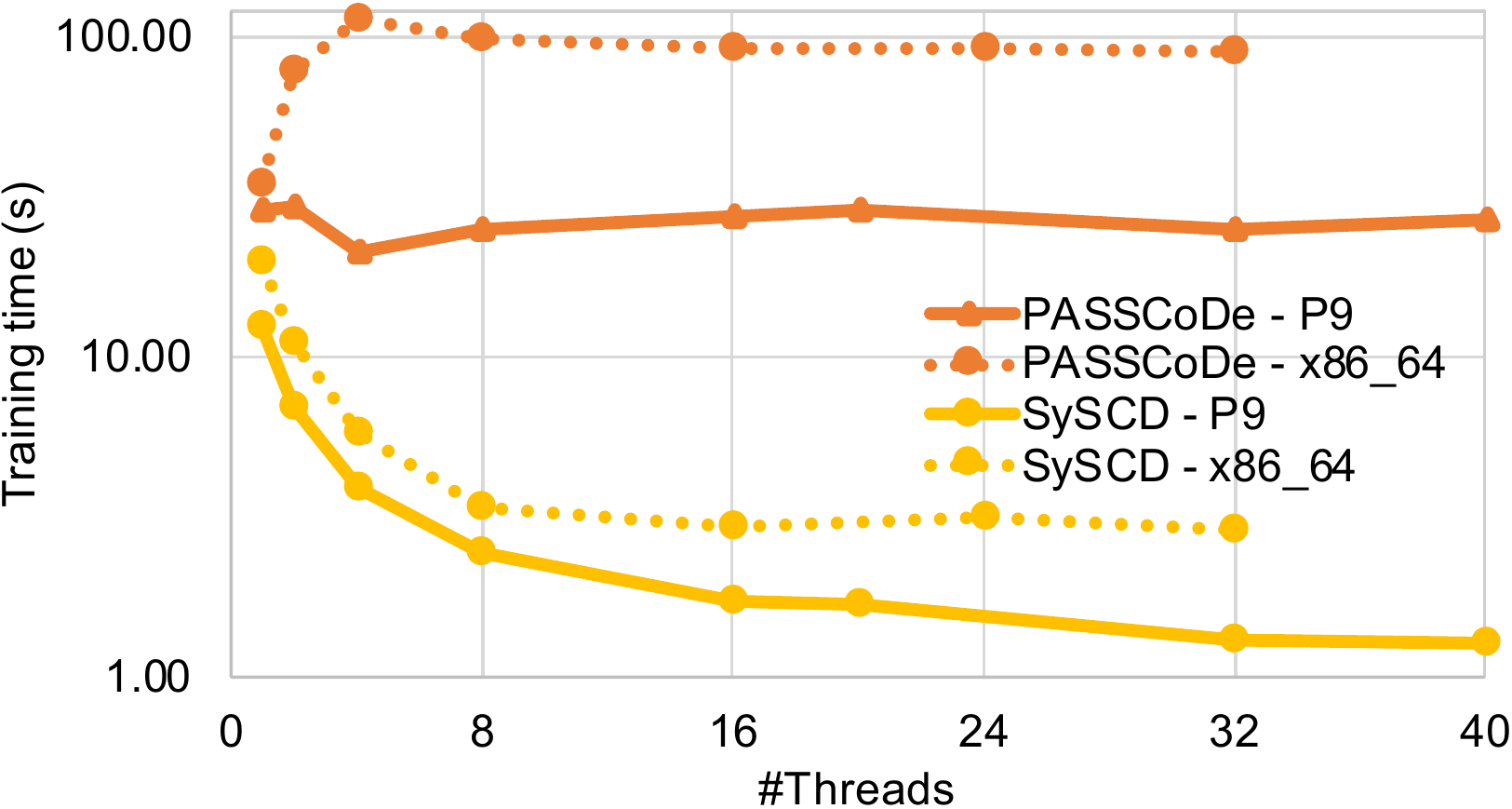}
    \label{fig:tconv:higgs:x86}
  }
  \subfloat[criteo-kaggle]{
    \includegraphics[width=.48\linewidth]{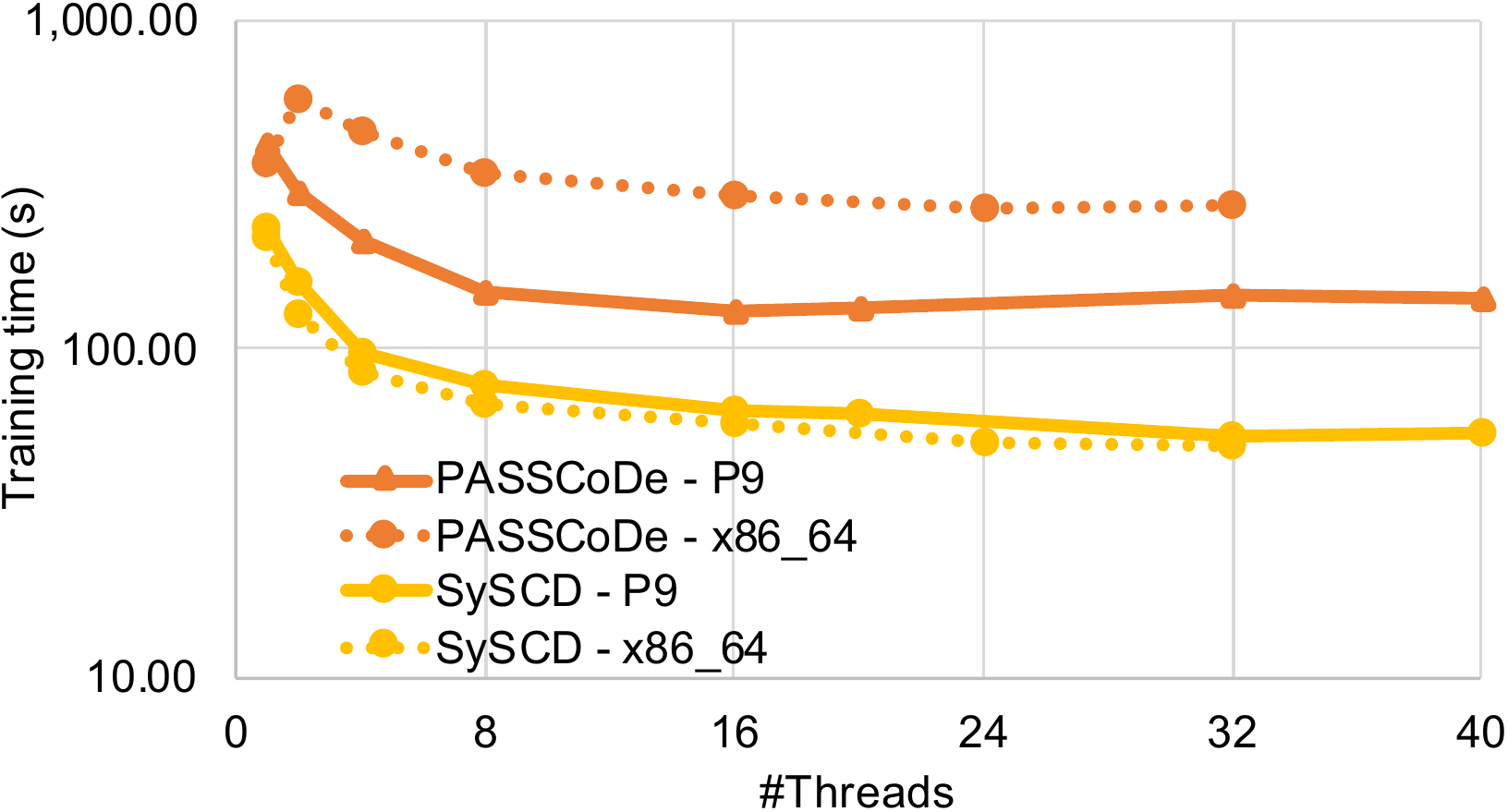}
    \label{fig:l}
    }
    \caption{Training time of the reference \passcode and our optimized \syscd implementation for different thread count. Results are presented on two different datasets and CPU architectures: a 2 node (P9) and a 4 node (X86\_64) machine.}
    \label{fig:tconv}
  \end{figure}

\subsection{Comparison with sklearn, VW, and H2O}
\label{eval:sklearn}
We finally compare the performance of our new solver against widely used frameworks for training GLMs. We compare with scikit-learn~\citep{scikit-learn}, using different solvers (\texttt{liblinear}, \texttt{lbfgs}, \texttt{sag}), with H2O~\citep{h2o}, using its multi-threaded \texttt{auto} solver and with VW~\citep{vowpal-wabbit}, using its default solver.
Care was taken to ensure that the regularization strength was equivalent across all experiments, and that the reported time did not include parsing of text and loading of data.
%
Results showing training time against test loss for the different solvers, on the two systems, are depicted in Fig~\ref{fig:ttacc}. We add results for \syscd with single (\textit{\syscd 1T}) and maximum (\textit{\syscd MT}) thread counts. 
Overall \textit{\syscd MT} is over $\times10$ faster, on average, than the best performing alternative solver.
The best competitor is VW for criteo-kaggle and H2O for higgs. H20 results are not shown in Fig~\ref{fig:kagglex86} and \ref{fig:kaggleP9} because we could not train the criteo-kaggle dataset in a reasonable amount of time ($>16$ hours), even by using the max\_active\_predictors option. 

\begin{figure*}[t!]
\captionsetup[subfloat]{captionskip=0.3cm}
\vspace{-0.1in}
  \subfloat[criteo-kaggle - x86\_64] {
    \includegraphics[width=.48\linewidth]{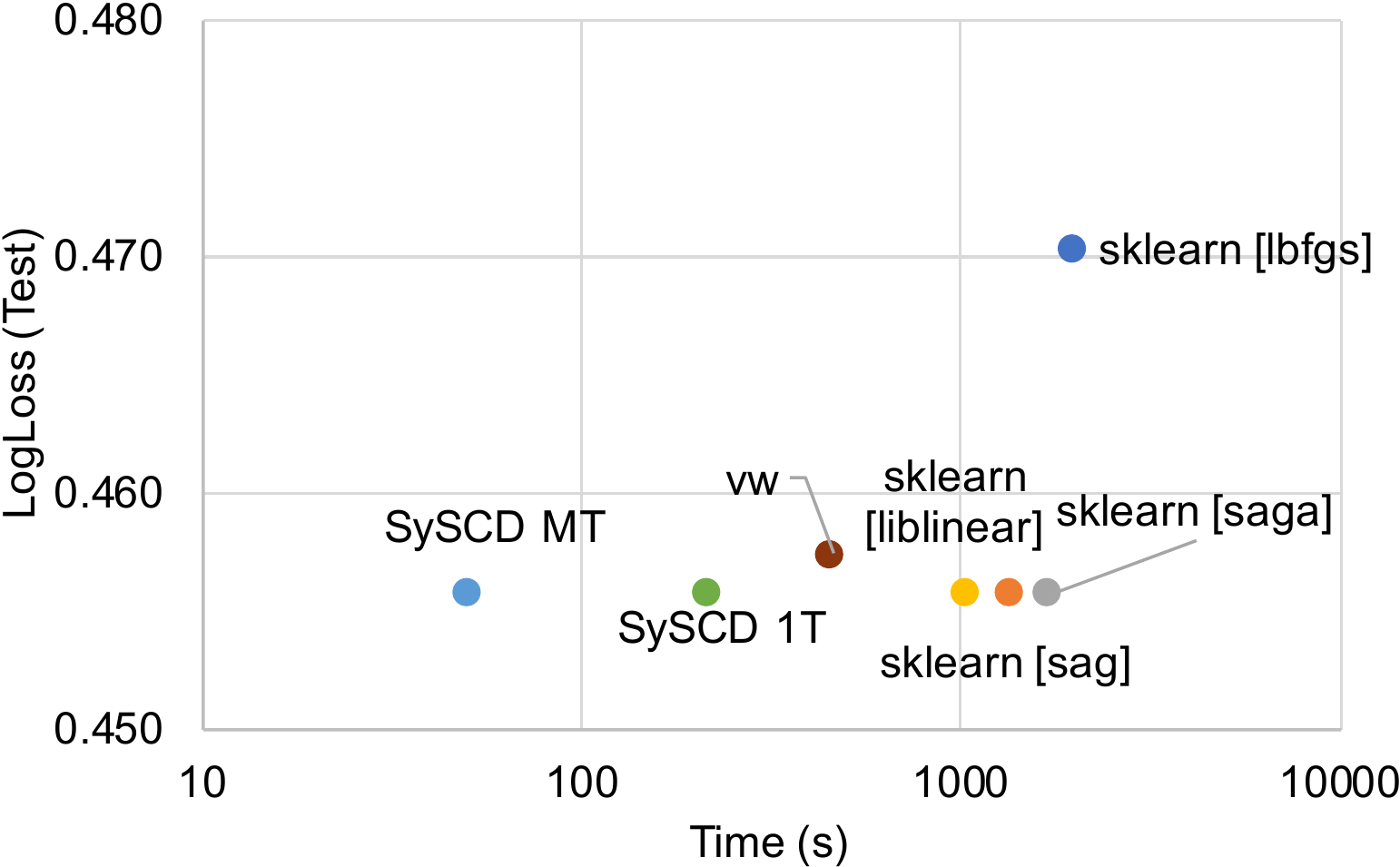}
    \label{fig:kagglex86}
  }
  \subfloat[criteo-kaggle - P9] {
    \includegraphics[width=.48\linewidth]{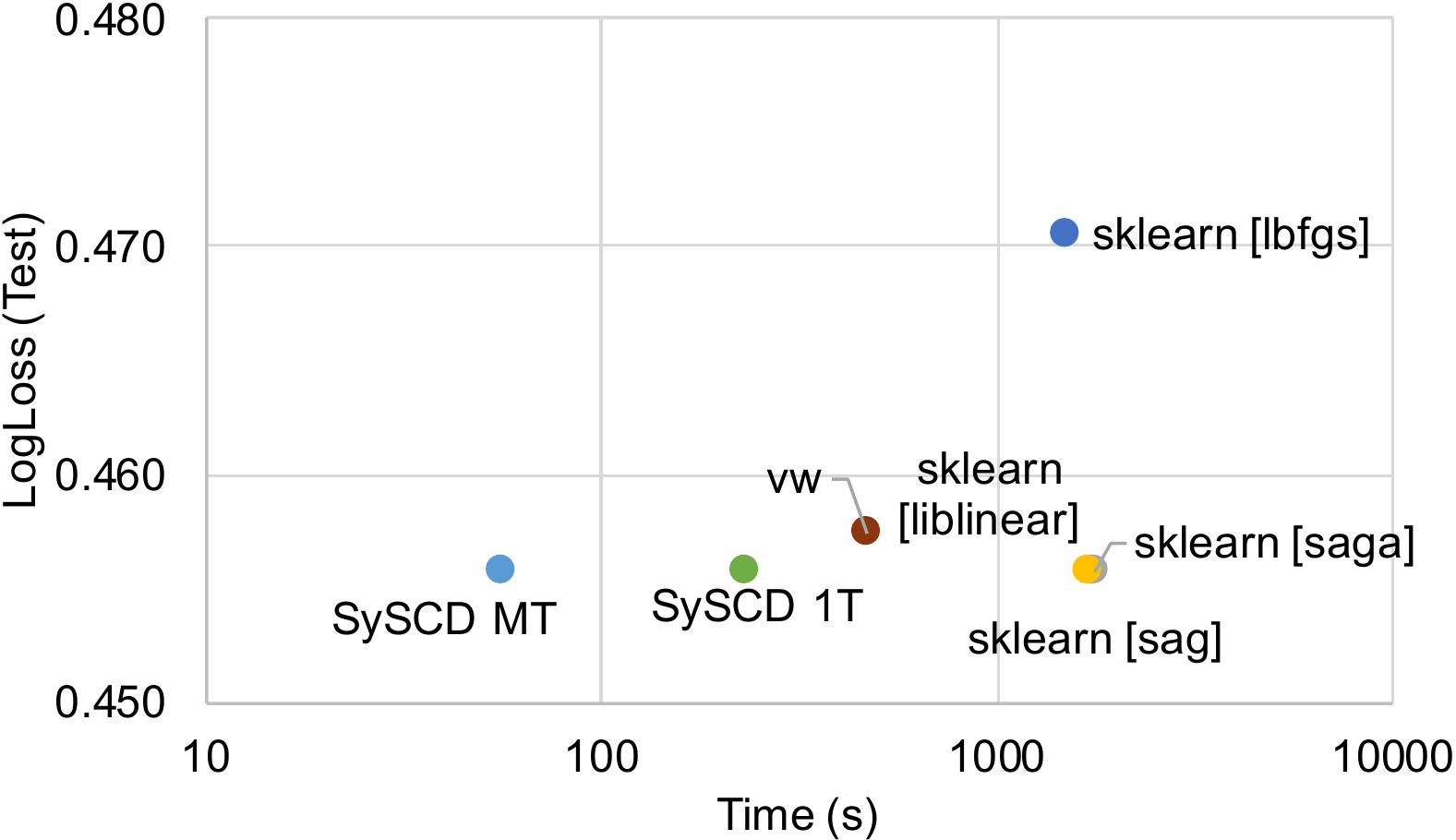}
    \label{fig:kaggleP9}
  }
  \\
  \subfloat[higgs - x86\_64] {
    \includegraphics[width=.48\linewidth]{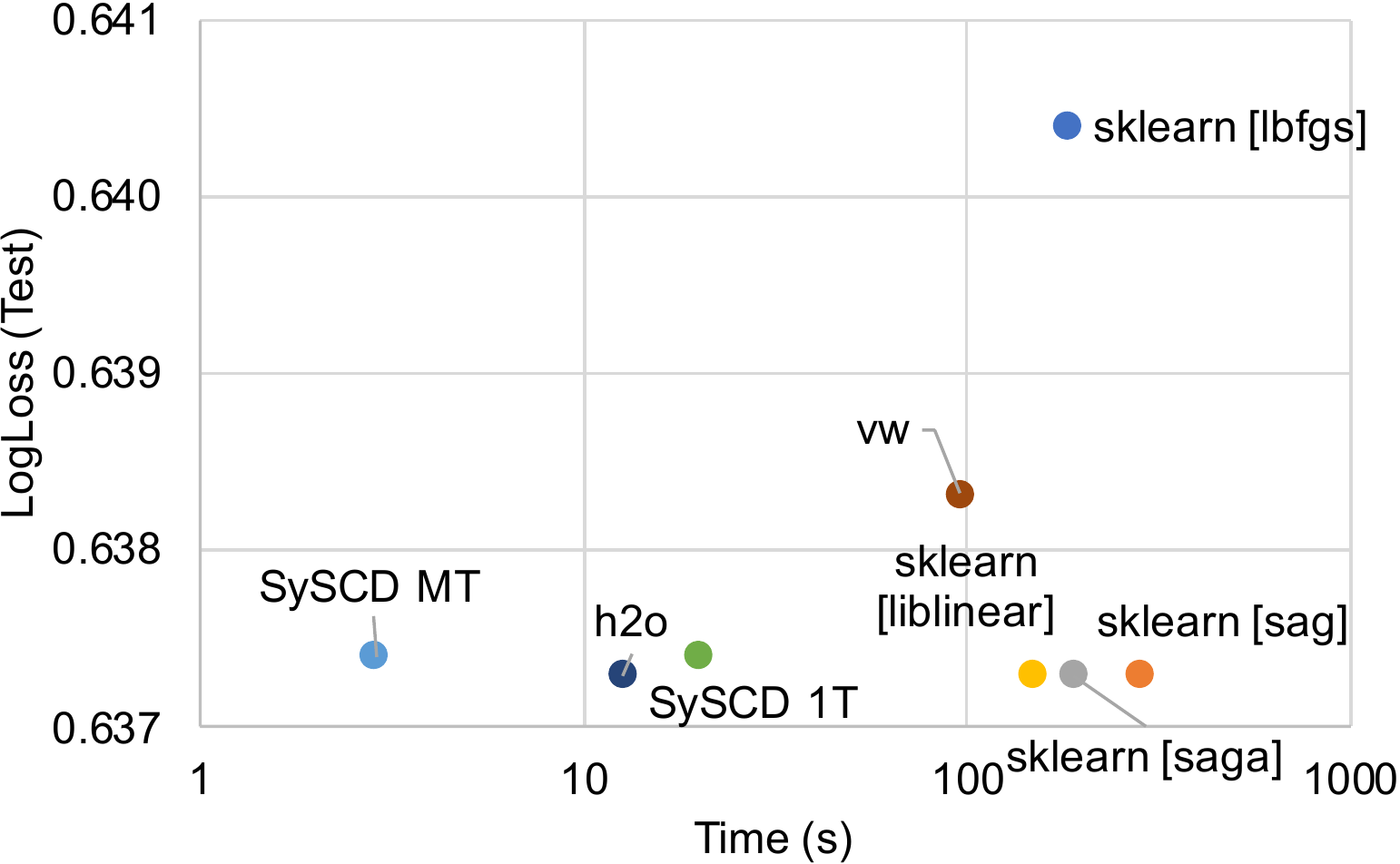}
    \label{fig:ttacc:higgs:x86}
  }
  \subfloat[higgs - P9] {
    \includegraphics[width=.48\linewidth]{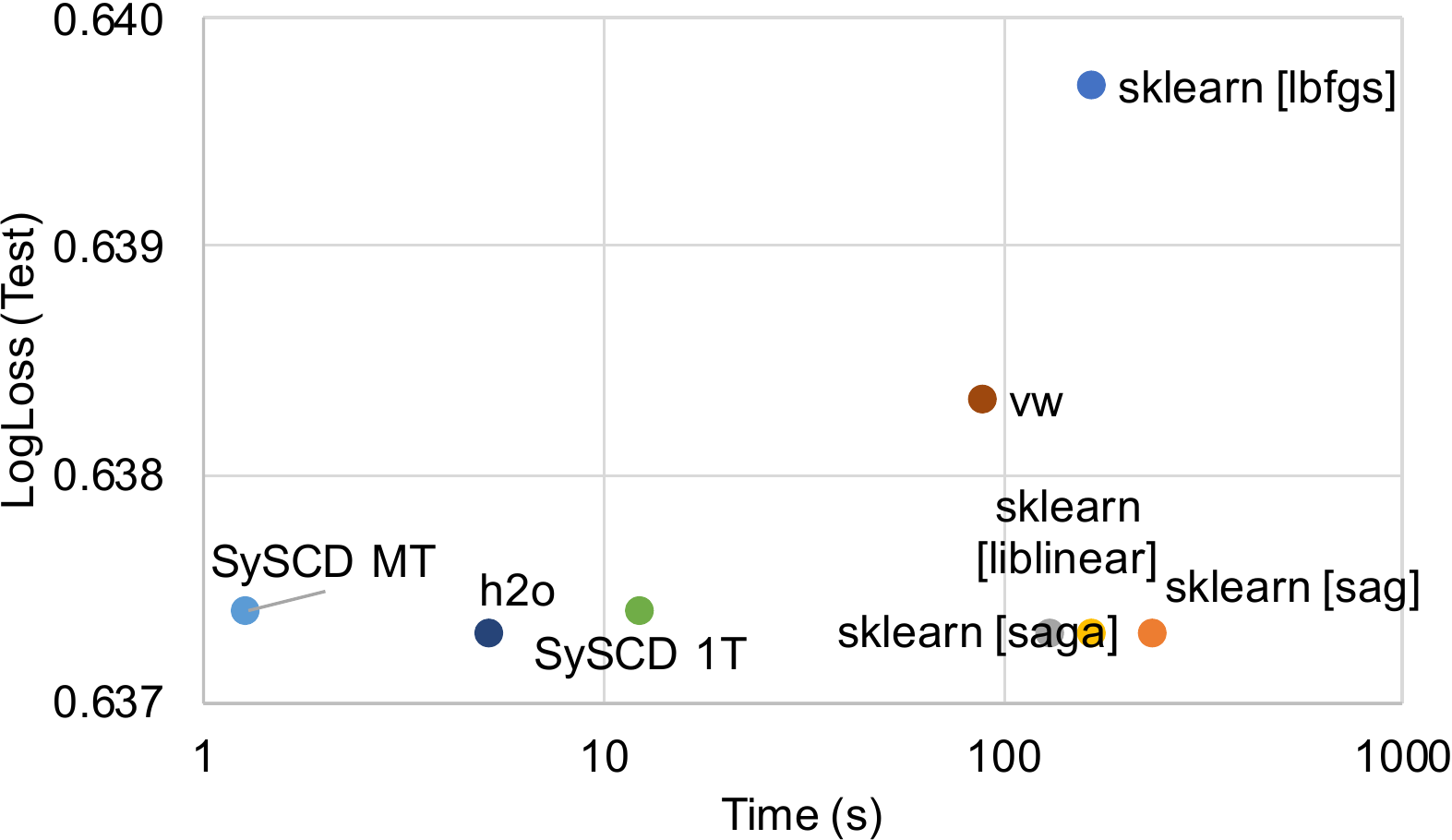}
    \label{fig:ttacc:higgs:P9}
  }
  \caption{Comparing a single- and multi-threaded implementations of \syscd against state-of-the-art GLM solvers available in scikit-learn, VW, and H2O.}
  \label{fig:ttacc}
\end{figure*}

\section{Conclusion}

We have shown that the performance of existing parallel coordinate descent algorithms which assume a simplistic model of the parallel hardware, significantly suffers from system bottlenecks which prevents them from taking full advantage of modern CPUs.
In this light we have proposed \syscd, a new system-aware parallel coordinate descent algorithm that respects cache structures, data access patterns and numa topology of modern systems to improve implementation efficiency and exploit fast data access by all parallel threads to reshuffle data and improve convergence.
Our new algorithm achieves a gain of up to $\times12$ compared to a state-of-the-art system-agnostic parallel coordinate descent algorithm.
In addition, \syscd enjoys strong scalability and convergence guarantees and is thus suited to be deployed in production.

\bibliography{paper}
\bibliographystyle{icml2019}

\newpage
\appendix

\section{Appendix}

\subsection{Reference Algorithm}
\label{app:algo}
Algorithm \ref{alg:a-sdca} summarizes the procedure of \passcode-wild \citep{Hsieh2015} which serves as a baseline for the bottleneck analysis in Section \ref{sec:baseline} and as a reference scheme in Fig~\ref{fig:tconv}. The updates of the shared vector in Step 10 of Algorithm~\ref{alg:a-sdca} are performed without any locking.

\begin{algorithm}[H]
  \begin{algorithmic}[1]
    \small
\State \textbf{Input:} Training data matrix $A=[\mathbf x_1, ... , \mathbf x_n]\in \mathbb{R}^{d\times n }$
	\State Initialize model $\boldsymbol \alpha$ and shared vector ${\mathbf v}=\sum_{i=1}^n \alpha_i {\mathbf x}_i$.
	\For {$t=1,2,\ldots,N_{epochs}$}
		\ParFor {$j\in\Call{RandomPermutation}{n}$}
			\State Read current state of model $\hat{\alpha}_j = \Call{Read}{\alpha_j}$
			\State Read current state of shared vector $\hat{\mathbf v} = \Call{Read}{\mathbf v}$
			\State $\delta = \argmin_{\delta \in\mathbb {R}} f(\hat {\mathbf v} + {\mathbf x}_j \delta)+g_j(\hat \alpha_j+\delta)$
			\State $\Call{Write}{\alpha_j,  \hat{\alpha}_j + \delta}$
			\For {$i=1,2,\ldots,d$}
				\State $\Call{Add}{v_i, \delta A_{i,j}}$
			\EndFor
		\EndParFor
	\EndFor
\end{algorithmic}
\caption{\small Parallel SCD for training GLMs of the form \eqref{eq:obj}}
\label{alg:a-sdca}
\end{algorithm}

\subsection{Local Subproblems in \syscd}
\label{app:solver}

Let $\Davk\in \R^n$ denote the vector $\Dav$ with only non-zero elements for indices $i\in\cP_k$, i.e., coordinates assigned to numa-node $k$. Similarly, let $\Davkp$ denotes the vector $\Dav_k'$ with only non-zero elements for coordinates $i\in \cP_{k,p}$. 
Then, the local optimization tasks in Step 16 of Algorithm \ref{alg:syscd}, assigned to thread $p\in[P]$ on numa-node $k\in[K]$, is defined as:

\begin{equation}\argmin_{\Davkp} \bar f(A\Davkp)+\bar g(\Davkp)
\label{eq:subpk}
\end{equation}
where 
\begin{align}
\bar f(A\Davkp)&:=[\nabla f(\vv)^\top  + K \gamma (\vv_k-\vv) ]A \Davkp + \frac {\gamma P K } 2  \|A\Davkp\|^2\label{eq:fbar}\\
\bar g(\Davkp)&:=\sum_{i\in\cP_{k,p}} g_i(\alpha_i+{\Dav_{k}'}_i)\label{eq:gbar}
\end{align}

\begin{proof}[Derivation]
We have $K$ numa nodes and $P$ threads on each numa node. Then, $\tilde f$ and $\tilde g$ in Step 16 of Alg.~\ref{alg:syscd} define the local optimization task solved by each thread. The respective formulation \eqref{eq:fbar}, \eqref{eq:gbar} can be derived from a nested application of the  block separable upper-bound $BSU(.)$ used in \cocoa \cite[Equation (10)]{cocoa18jmlr}. A similar two-level approach has been taken by \cite[Appendix B]{duhl2017} in the context of hierarchical training across GPUs. This yields the following block separable objective which is solved across the $K$ numa nodes

\begin{align}
\argmin_\Dav \hat F(\Dav)&:=BSU_K\big[f(A\alphav+A\Dav)+\sum_i g_i(\alphav+\Dav)\big]\\
&:= f(\vv)+ \sum_k [\nabla f(\vv)^\top A \Davk + \frac {\gamma K} 2 \|A \Davk\|^2 + \sum_{i\in\cP_k} g_i(\alpha_i+\Davk_i)].
\end{align}
where $\vv:=A\alphav$. Thus, numa-node $k\in[K]$ is assigned the following subproblem:
\begin{align}
\argmin_\Davk \frac 1 K f(\vv)+\nabla f(\vv)^\top A \Davk + \frac {\gamma K} 2 \|A \Davk\|^2 + \sum_{i\in\cP_k} g_i(\alpha_i+\Davk_i)
\label{eq:auxk}
\end{align}
 In order to compute the update $\Davk$ on numa node $k\in[K]$ across the $P$ threads, we again apply the $BSU(.)$ to \eqref{eq:auxk}. This, yields the following separable problem:

\begin{align}
\argmin_{ \Dav_k'} \bar F({\Dav'_k})
&:=C+ \sum_p \big[[\nabla f(\vv)^\top  + K \gamma  (\vv_k-\vv) ]A \Davkp + \frac {\gamma P K } 2  \|A\Davkp\|^2 \notag\\&+ \sum_{i\in\cP_{k,p}} g_i(\alpha_i+{\Dav_{k}'}_i)\big].
\label{eq:subk}
\end{align}

To find $\Davk$ the optimization problem \eqref{eq:subk} is solved repeatedly, in parallel, across the $P$ locally available threads. After each iteration the model $\alphav$ is updated as $\alphav = \alphav + \Dav_k' $ and we locally keep track of the auxiliary information $\vv_k:=\vv+A\Dav_k'$ on each numa node $k$.   Consequently thread $p$ and numa node $k$ is assigned the  subproblem stated in \eqref{eq:subpk}.

Crucially, each thread updates a different, distinct subset $\cP_{k,p}$ of coordinates of $\alphav$ and only requires access to the respective columns of $A$. Thereby, memory access bottlenecks on $\alphav$ as well as $A$ are avoided in \syscd. In addition $\vv_k$ is local to each numa node and only needs to be synchronized periodically, mitigating the main scalability bottleneck (B3).

\end{proof}

\subsection{Convergence Analysis}
\label{app:proof}

In this section we will proof the convergence result stated in Theorem~\ref{thm:rate}. We proceed level by level and then in the end nest the individual rates. We will focus on strongly convex $g$ for the reason of simplicity, however, every single component used in the proof has also non-asymptotic convergence guarantees for non-strongly convex $g$, as can be found in the respective references. For our analysis the following definition for the approximation quality of a solution, according to \cite{cocoa18jmlr}, will be useful:

\begin{definition}[$\theta$-approximate update]
A solution $\Dav$ is called $\theta$-approximate for $\min_\Dav F(\alphav+\Dav)$ iff
\[F(\alphav+\Dav)-F^\star\leq \theta (F(\alphav)-F^\star)\]
where $F^\star=\min_\alphav F(\alphav)$.
\label{def:approx}
\end{definition}

\subsubsection{Data-Parallelism across Numa-Nodes and Threads} 
Our hierarchical numa-aware optimization pattern described in Section \ref{subsec:numa-opt} implements a hierarchical version of \cocoa across numa-nodes and threads. First, each numa-node is assigned the \cocoa subproblem \eqref{eq:auxk} which is then solved across the $P$ available threads, again using \cocoa. This results in subproblem \eqref{eq:subpk} assigned to thread $p$ on numa node $k$. 
Let us for now ignore how this problem is solved, we just assume it is solved $\theta$-approximately.

\begin{theorem}[Hierarchical \cocoa -- adapted from \citep{snapml18nips}]
Consider Algorithm \ref{alg:syscd} for solving \eqref{eq:obj}. Assume $f$ is $\gamma$-smooth and $g_i$ are $\mu$-strongly convex functions. Let $P$ be the number of threads per numa node and $K$ the number of numa nodes. Let $T_2$ be the number of synchronization steps performed on each numa node before global synchronization. Assume the local subtasks \eqref{eq:subpk} assigned to each thread are solved $\theta$-approximately.  Then, after $T_1$ outer rounds the suboptimality $\varepsilon = F(\alphav)-\min_\alphav F(\alphav)$ can be bounded as
 {\small
\begin{eqnarray}
\mathrm E [ {\varepsilon}]\leq \left( 1- \left[1-\left(1-(1-\theta)\frac{\gamma K c_A+\mu}{\gamma K P c_A + \mu}\right)^{T_2}\right] \frac \mu {K\gamma c_A+\mu}\right)^{T_1}\varepsilon_0
\label{eq:rate}
\end{eqnarray}
}

where $\|\mathbf x_i\|^2\leq R$ and $c_A:=\max_{\mathbf x}\frac {\|A\mathbf x\|^2}{\|\mathbf x\|^2}$ and $\varepsilon_0$ denotes the initial suboptimality.
\label{thm:hcocoa}
\end{theorem}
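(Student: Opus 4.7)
\textbf{Proof Plan for Theorem \ref{thm:hcocoa}.}

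The plan is to apply the CoCoA convergence theorem (\cite[Theorem 3]{cocoa18jmlr}) in a nested fashion that mirrors the two-level structure of Alg.~\ref{alg:syscd}. The outer for-loop indexed by $T_1$ corresponds to \cocoa deployed across $K$ numa nodes, each of which is assigned the subproblem~\eqref{eq:auxk}; the $T_2$ inner iterations in turn correspond to \cocoa deployed across the $P$ threads within one numa node, each thread being assigned~\eqref{eq:subpk}. This matches the hierarchical analysis carried out in \cite[Appendix B]{snapml18nips}, and the overall rate in \eqref{eq:rate} is obtained by plugging the inner guarantee into the outer one.

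First, I would instantiate the CoCoA bound at the outer level. For $\gamma$-smooth $f$ and $\mu$-strongly convex $g_i$, \cite[Theorem 3]{cocoa18jmlr} yields a per-round contraction of the form
\[
\mathrm E[\varepsilon^{(t+1)}] \leq \Bigl(1 - (1-\Theta)\tfrac{\mu}{\mu + K\gamma c_A}\Bigr)\, \mathrm E[\varepsilon^{(t)}],
\]
where $\Theta$ denotes the approximation quality (in the sense of Definition~\ref{def:approx}) with which each outer subproblem~\eqref{eq:auxk} is solved, and the factor $K\gamma c_A$ reflects that the block-separable upper bound used to decouple the $K$ numa nodes inflates the effective smoothness by the number of blocks scaled by $c_A = \|A\|_{op}$. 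Iterating $T_1$ times contributes the outer factor $\bigl(1-(1-\Theta)\tfrac{\mu}{\mu+K\gamma c_A}\bigr)^{T_1}$ in \eqref{eq:rate}.

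Second, I would bound $\Theta$ by applying \cite[Theorem 3]{cocoa18jmlr} a second time at the inner level. Each outer subproblem~\eqref{eq:auxk} has exactly the composite form $\bar f(A\Dav) + \sum_i \bar g_i(\Dav_i)$ required by the theorem, except that the smoothness of $\bar f$ is now $\gamma K$ instead of $\gamma$ because of the extra quadratic term introduced by the outer BSU. Splitting across $P$ threads via a second BSU and running $T_2$ rounds of inner \cocoa, with each innermost subproblem~\eqref{eq:subpk} solved $\theta$-approximately, yields
\[
1 - \Theta \;\geq\; 1 - \left(1 - (1-\theta)\tfrac{\gamma K c_A + \mu}{\gamma K P c_A + \mu}\right)^{T_2},
\]
where the denominator $\gamma K P c_A + \mu$ arises from the second-level BSU inflation by a factor of $P$, and the numerator $\gamma K c_A + \mu$ reflects the combined smoothness and strong-convexity constants of the outer subproblem viewed as the target of the inner analysis. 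Substituting this expression for $\Theta$ back into the outer bound produces precisely \eqref{eq:rate}.

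The main obstacle will be the bookkeeping of constants as the block-separable upper bound is applied at each of the two levels. Concretely, one must verify (i) that the strong-convexity parameter $\mu$ of $g_i$ is preserved unchanged at every level, since $\bar g_i$ and $\tilde g_i$ inherit $g_i$ up to constants independent of the block index; (ii) that the smoothness constants of $\bar f$ and $\tilde f$ correctly pick up factors of $K$ and $KP$ respectively, consistently with the operator-norm bound $c_A$; and (iii) that the \cocoa aggregation parameter is set at each level so the safe step-size condition of \cite[Assumption 1]{cocoa18jmlr} is satisfied, which is what allows the recursion of rates to close cleanly. Once these are checked, the theorem follows by multiplying the two geometric rates. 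The extension to general convex $g_i$ (Remark~\ref{remark:nonsc}) would proceed analogously by invoking the sublinear variant of \cite[Theorem 3]{cocoa18jmlr} at both levels.
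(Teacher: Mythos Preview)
Your proposal is correct and follows essentially the same route as the paper. The paper's own proof of this theorem consists of a single sentence invoking the hierarchical \cocoa analysis of \cite[Appendix B.1]{snapml18nips}, with numa nodes playing the role of the outer workers and threads the inner workers; your plan spells out exactly this nested application of \cite[Theorem 3]{cocoa18jmlr}, tracks the constants through the two BSU levels, and arrives at the stated rate, so there is nothing to add.
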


\begin{proof}
For analyzing this hierarchical method we can build on the convergence result derived by \citet[in Appendix B.1]{snapml18nips} for hierarchical \cocoa in a heterogeneous GPU-CPU setting. In the context of Algorithm \ref{alg:syscd} the outer level of \cocoa spans across numa nodes (instead of physical CPUs) and the inner level across threads on each numa nodes (instead of GPUs). 
\end{proof}

\subsubsection{SCD using Buckets}
To find a $\theta$-approximate update to the subproblem \eqref{eq:subpk} on each thread \syscd applies SCD with bucketized randomization. This means, in each step $j=1,2\dots, T_3$ a bucked $\cB\in \cP_{k,p}$ is selected uniformly at random and then $T_3$ steps of SCD are performed on the coordinates assigned to $\cB$. We call this local optimization loop performed on each thread (Step 12 of Alg.~\ref{alg:syscd}) bucketized SDCA and derive the following convergence guarantee:

\begin{theorem} Consider the following optimization problem $\min_\alphav \bar f(A\alphav)+\sum_i \bar g_i(\alpha_i)$.
Assume $\bar f$ is $\bar \gamma$-smooth and $\bar g_i$ are $\bar \mu$-strongly convex. Then SDCA with bucketized randomization and $T_4$ SCD steps on each bucket, converges as
\[\mathrm E[\varepsilon]\leq \left(1- \left[ 1-\left( 1-\frac 1 n \frac {\bar \mu }{\bar \gamma  +\bar\mu}\right)^{T_4} \right]\frac B n  \frac {\bar\mu}{\bar\gamma c_A +\bar\mu}\right)^{T_3}\varepsilon_0 \]
after $T_3$ steps where $B$ denotes the bucket size.
\label{thm:buckets}
\end{theorem}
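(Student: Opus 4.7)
The plan is to interpret the local solver of Alg.~\ref{alg:syscd} (lines~12--17) as a two-level randomized scheme and nest two convergence rates: an outer randomized block coordinate descent (BCD) step that draws a bucket of size $B$ uniformly at random, followed by an inner loop of $T_4$ SDCA updates that approximately solves the restricted bucket subproblem. The closing rate has the form $\left(1-[1-\rho_{\mathrm{in}}^{T_4}]\rho_{\mathrm{out}}\right)^{T_3}$, which is the standard signature obtained when one nests an approximate block update (Definition~\ref{def:approx}) inside randomized BCD, so the whole argument reduces to correctly identifying $\rho_{\mathrm{in}}$ and $\rho_{\mathrm{out}}$.

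First, I would treat the outer bucket sampling as randomized BCD on the partitioning into $n/B$ blocks of size $B$. Viewing the objective $\bar f(A\alphav)+\sum_i \bar g_i(\alpha_i)$ as block-separable through $\bar g$, and bounding the block-Lipschitz behaviour of $\bar f \circ A$ by $\bar\gamma c_A$ (with $c_A=\|A\|_{\mathrm{op}}$), I would invoke a BCD contraction in the style of \cite[Theorem~1]{duhl2017} for $\bar\mu$-strongly convex $\bar g$ and $\bar\gamma$-smooth $\bar f$. If each block update were exact, the per-round contraction factor would be $1-\frac{B}{n}\frac{\bar\mu}{\bar\gamma c_A+\bar\mu}$, where $B/n$ is the probability of hitting any particular bucket. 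Extending this to the case where the block update is only $\theta'$-approximate converts the contraction factor into $1-\frac{B}{n}\frac{\bar\mu}{\bar\gamma c_A+\bar\mu}(1-\theta')$, which is precisely the slot into which the inner SDCA rate will be plugged.

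Second, I would bound $\theta'$ with the SDCA convergence guarantee of \cite{sdca2013}. Since the bucket subproblem inherits $\bar\mu$-strong convexity from $\bar g$ and $\bar\gamma$-smoothness from $\bar f$, and each inner step updates a single coordinate (so the operator-norm factor $c_A$ does not appear in the one-coordinate smoothness), $T_4$ inner iterations yield the contraction $\theta' = \left(1-\frac{1}{n}\frac{\bar\mu}{\bar\gamma+\bar\mu}\right)^{T_4}$ in the normalization used throughout the paper. Substituting $\theta'$ into the outer contraction and iterating over $T_3$ rounds gives the stated bound; the expectations factor cleanly by the tower property because the bucket and intra-bucket coordinate draws are independent and uniform.

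The main obstacle I anticipate is bookkeeping the constants so that the right quantities land in the right denominators: in particular, ensuring that $c_A$ appears in the outer denominator (where blocks of $B$ columns of $A$ act simultaneously) but not in the inner one (single-coordinate SDCA steps are governed by $\bar\gamma$ alone), and verifying that the subproblem restricted to a bucket is still $\bar\mu$-strongly convex and $\bar\gamma$-smooth so that the \cite{sdca2013} rate applies verbatim. Once these smoothness/strong-convexity constants are pinned down, the nested contraction telescopes routinely, along exactly the same lines as the proof of Theorem~\ref{thm:hcocoa}.
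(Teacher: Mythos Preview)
Your proposal is correct and mirrors the paper's proof essentially line for line: the paper also casts the bucket step as randomized block coordinate descent via \cite[Theorem~1]{duhl2017} to obtain the outer contraction $1-(1-\bar\theta)\frac{B}{n}\frac{\bar\mu}{\bar\gamma c_A+\bar\mu}$, then plugs in the SDCA rate from \cite[Theorem~4]{sdca2013} to get $\bar\theta=(1-\tfrac{1}{n}\tfrac{\bar\mu}{\bar\gamma+\bar\mu})^{T_4}$, including the same placement of $c_A$ in the outer but not the inner factor. Your anticipated ``obstacle'' about constant bookkeeping is exactly the only thing the paper's proof spells out.
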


\begin{proof}
$\min_\alphav \bar f(A\alphav)+\sum_i \bar g_i(\alpha_i)$ 
 For analyzing bucketized SDCA we will combine results of randomized block coordinate descent from \cite[Theorem 1]{duhl2017} with classical convergence results of SDCA \citep[Theorem 4]{sdca2013}.

Let a \textit{block} of coordinates corresponds to the coordinates within the selected bucket $\cB$. Then, results in \cite[Theorem 1]{duhl2017} say that after $T_3$ block coordinate updates on \eqref{eq:subpk} the suboptimality $\varepsilon^t$ satisfies
\begin{equation}
\mathrm E[\bar\varepsilon^t]\leq \left(1- (1-\bar\theta) \frac B n  \frac {\bar \mu}{\bar \gamma c_A +\bar\mu}\right)^{T_3}\bar\varepsilon_0 
\label{eq:rateblock}
\end{equation}
where $\bar\varepsilon$ denotes the suboptimality of \eqref{eq:subpk} and, accordingly, $\bar\varepsilon_0$ denotes the initial suboptimality when starting the thread-local optimization in Step 12 of Alg.~\ref{alg:syscd}. $\bar \theta$ denotes the approximation quality of the individual block updates, according to Definition~\ref{def:approx}.

For computing a  the $\bar\theta$-approximate update \syscd uses $T_4$ steps of SDCA. The  SDCA algorithm  has a linear rate when applied to strongly convex objectives \citep[Theorem 4]{sdca2013} and thus the suboptimality decays linearly as
\begin{equation}
\varepsilon^{(t+1)}\leq (1-\rho)\varepsilon^{(t)}
\label{eq:ratesdca}
\end{equation}
with $\rho=\frac 1 n \frac{\bar \mu}{\bar \gamma +\bar \mu}$. Hence,  after $T_4$ steps of SDCA  the approximation quality $\bar\theta$ of the block update (Definition~\ref{def:approx}) satisfies
\begin{equation}
\bar\theta = \left(1-\frac 1 n \frac {\bar \mu }{\bar \gamma  +\bar \mu}\right)^{T_4}.
\label{eq:thetabar}
\end{equation}
Combining this with the rate of block coordinate descent \eqref{eq:rateblock} yields the convergence guarantee presented in Theorem~\ref{thm:buckets}.
\end{proof}

\subsubsection{Overall Rate}
Building on the above convergence results of the individual components we can derive an end-to-end rate for \syscd. Therefore we again exploit the definition of a $\theta$-approximate update  in Theorem~\eqref{thm:hcocoa}, regarding the updates computed on each thread. 
Using Theorem~\ref{thm:buckets} we have 

\begin{equation}\theta =  \left(1- \left[ 1-\left( 1-\frac 1 n \frac {\bar \mu }{\bar \gamma  +\bar\mu}\right)^{T_4} \right]\frac B n  \frac {\bar\mu}{\bar\gamma c_A +\bar\mu}\right)^{T_3}.
\end{equation}

Note that the optimization problem assigned to each thread is defined as \eqref{eq:subpk}, thus we have $\bar \mu = \mu$ and $\bar \gamma = KP\gamma$.
Combining this with Theorem~\ref{thm:hcocoa}  yields the convergence rate for Algorithm \ref{alg:syscd} presented in Theorem \ref{thm:rate}.


\begin{remark}[Non-Strongly Convex $g$]
For the case where $g_i$ are non-strongly convex \cocoa, hierarchical \cocoa, block-coordinate descent, as well as \sdca have a sublinear convergence rates, see \citep[Theorem 2]{cocoa18jmlr}, \cite[Equation 2]{snapml18nips} \cite[Theorem 2]{duhl2017} and \citep[Theorem 2]{sdca2013}\citep[Theorem 9]{pmlr-v48-dunner16}, respectively. These guarantees for the individual components guarantee a function decrease in each update step of \syscd. Hence our method will converge. However, to derive an end-to-end non-asymptotic convergence rate the definition of multiplicative subproblem accuracy (Definition~\ref{def:approx}) is not well suited.
\label{remark:nonsc}
\end{remark}

\section{Additional Experiments}

In the following we will present additional experiments to more broadly support the claims of the paper. 
These results were offloaded to the appendix due to space limitations.

\subsection{Motivational Example}

The motivational examples shown in Fig~\ref{fig:mot:example} are performed on a synthetic dense dataset. The equivalent results for a sparse dataset are depicted in Fig~\ref{fig:mot:example:sparse}. The dataset has 1k features and 100k examples and a uniform sparsity of  $1\%$. We see that the algorithm is less prone to diverge if the data is more sparse because collisions on the shared vector $\vv$ are less likely. This is in line with the theoretical results presented by \cite{Hsieh2015}. However, the number of epochs to converge as well as implementation overheads resulting in no payback beyond 8 threads.

\begin{figure}[t!]
\begin{minipage}{\textwidth}
  \begin{minipage}[b]{0.48\textwidth}
  \centering
\includegraphics[width = \linewidth]{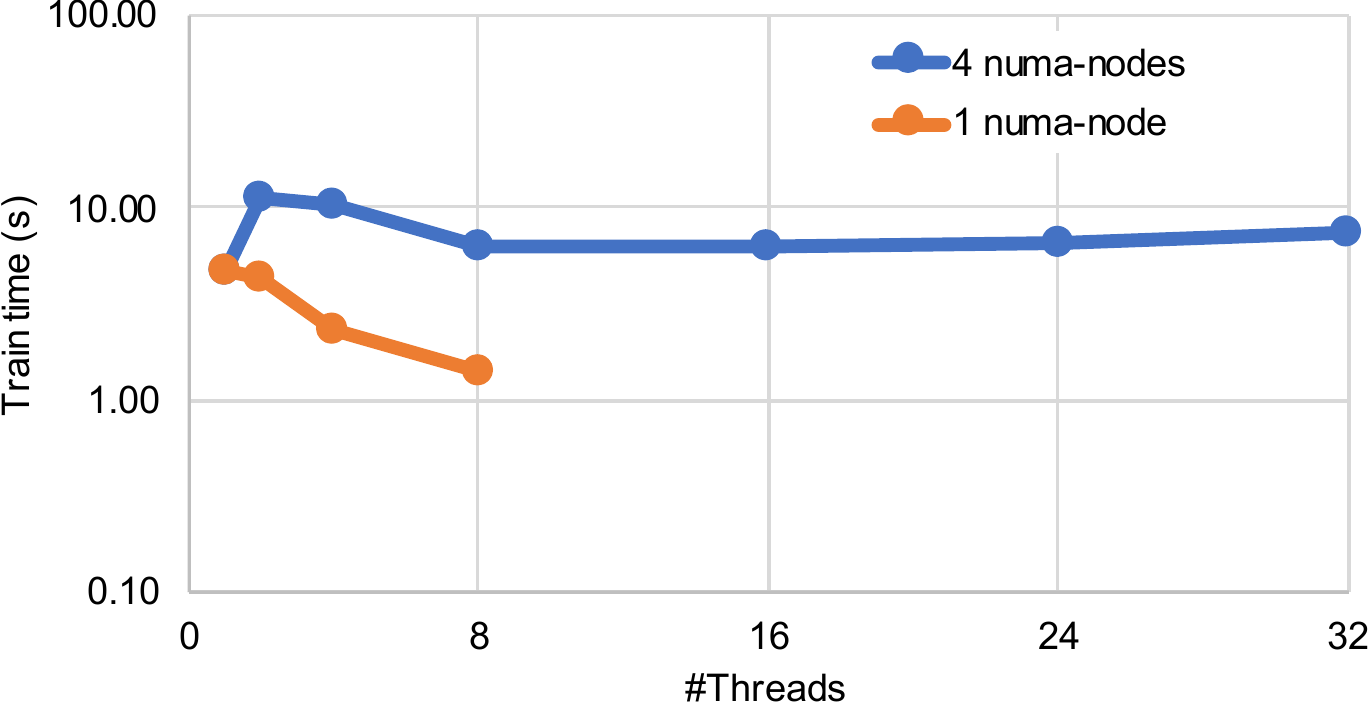}
   \caption{Training time of \passcode-wild with increasing thread count on a sparse synthetic dataset.}
   \label{fig:mot:example:sparse}
\end{minipage}
\hfill
\begin{minipage}[b]{0.48\textwidth}
  \centering
\includegraphics[width = \linewidth]{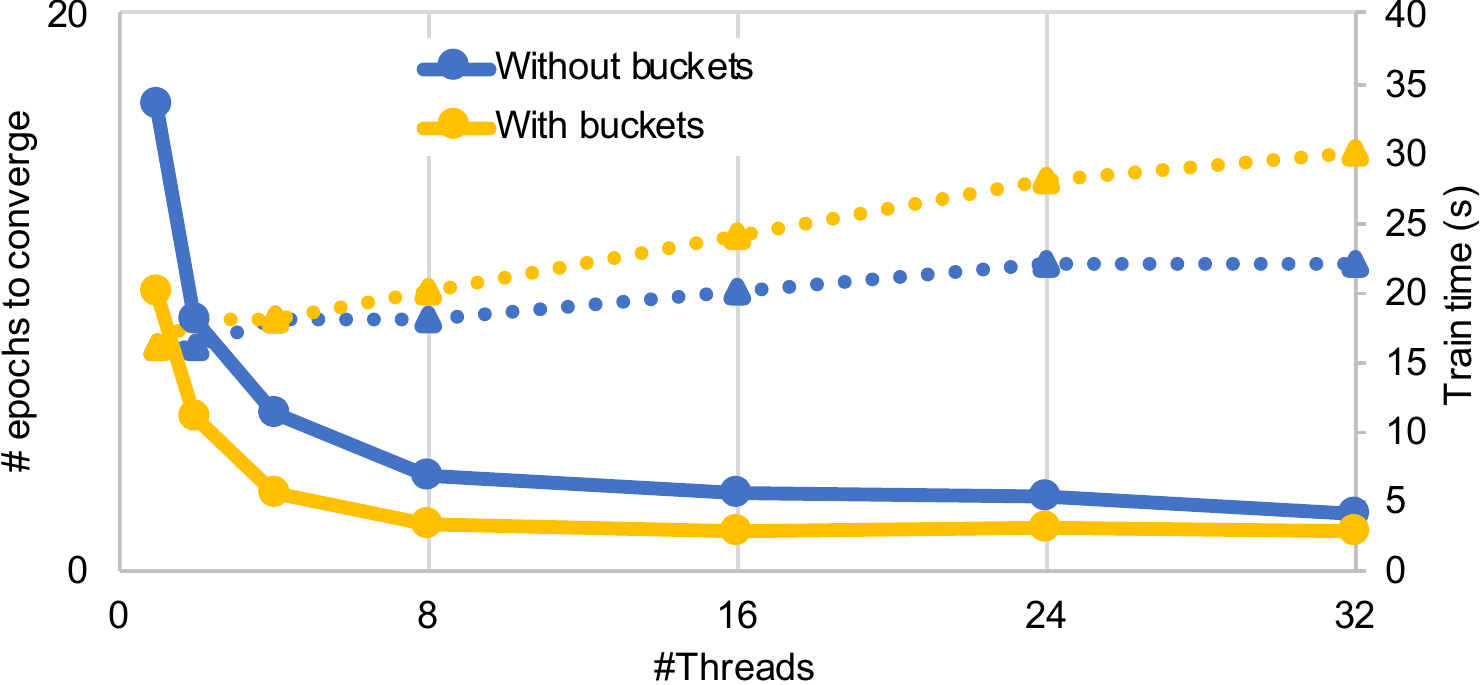}
   \caption{Bucket Optimization:  Gain achieved by using buckets on the higgs dataset on the 4 node system. Solid lines indicate time, and dashed-lines depict number of epochs. \vspace{-0.4cm}}
   \label{fig:appx:bucketsize:higgs:x86}
\end{minipage}
\end{minipage}
\end{figure}

\subsection{Bottlenecks}
The results of the bottleneck analysis referred to in Section~\ref{sec:baseline} are depicted in Table~\ref{tbl:mot:dense:single-thread:x86} and Fig~\ref{fig:mot:dense:scaling:x86}.

\begin{figure*}[h!]
\begin{minipage}{\textwidth}
  \begin{minipage}[b]{0.48\textwidth}
	\small
	\centering
	\captionof{table}{Training time per epoch of the single-threaded solver on the higgs dataset (a) of the original algorithm, (b) when the model vector $\alphav$ is accessed sequentially, and (c) when the computation of random shuffling of coordinates is also removed.} 
	\begin{tabular}{l | ccc}
	\hline
		&shuffling     & random access  & time \\
		&&to $\alphav$ & per epoch [s]\\
		\hline
		(a)&yes & yes & 4.33\\
		(b)&yes & no & 2.60\\
		(c)&no & no & 2.18\\
		\hline
	\end{tabular} 
	\label{tbl:mot:dense:single-thread:x86}
	\vspace{0.6cm}
\end{minipage}
\hfill
\begin{minipage}[b]{0.48\textwidth}
\centering
    \includegraphics[width=\linewidth]{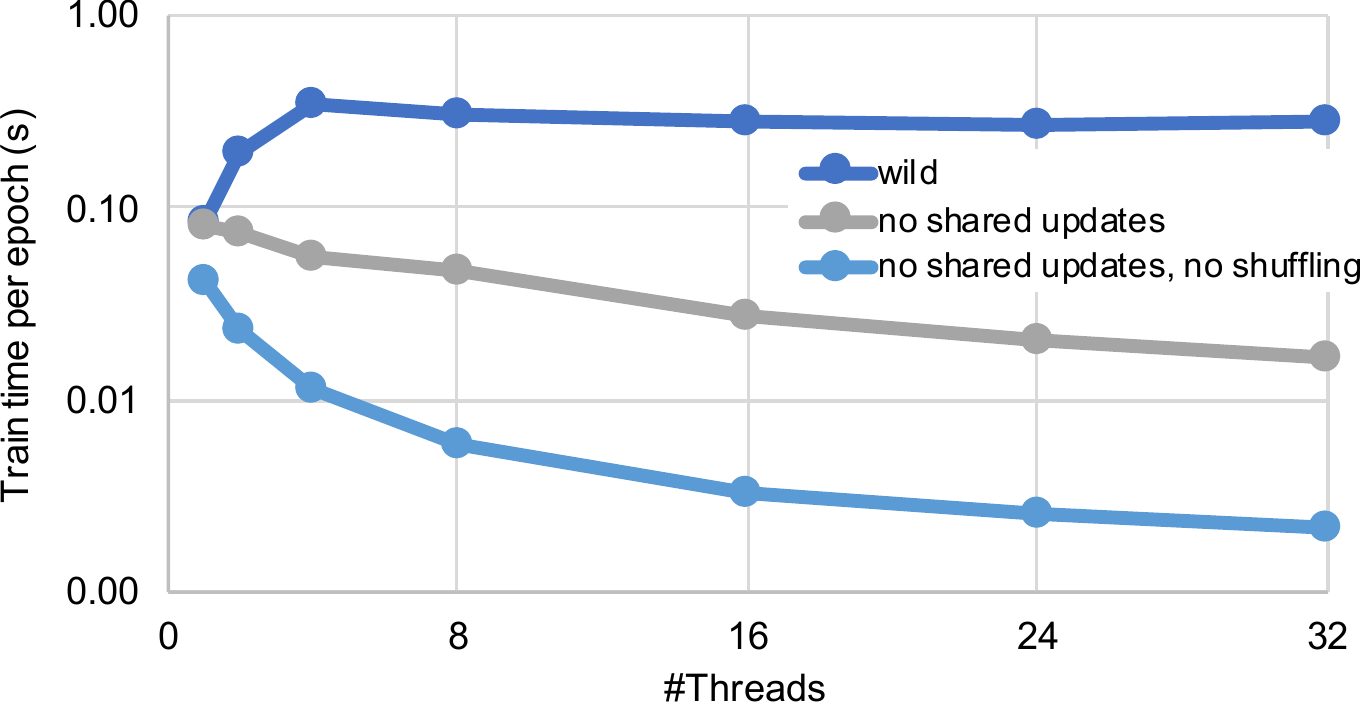}
     \captionof{figure}{Multi-threaded performance of the \passcode-wild without shared updates, and without shuffling, on the dense synthetic dataset.}
     \label{fig:mot:dense:scaling:x86}
\end{minipage}
\end{minipage}
\end{figure*}

\subsection{Individual Optimizations}

\begin{figure*}[t!]
  \subfloat[higgs] {
    \includegraphics[width=.48\linewidth]{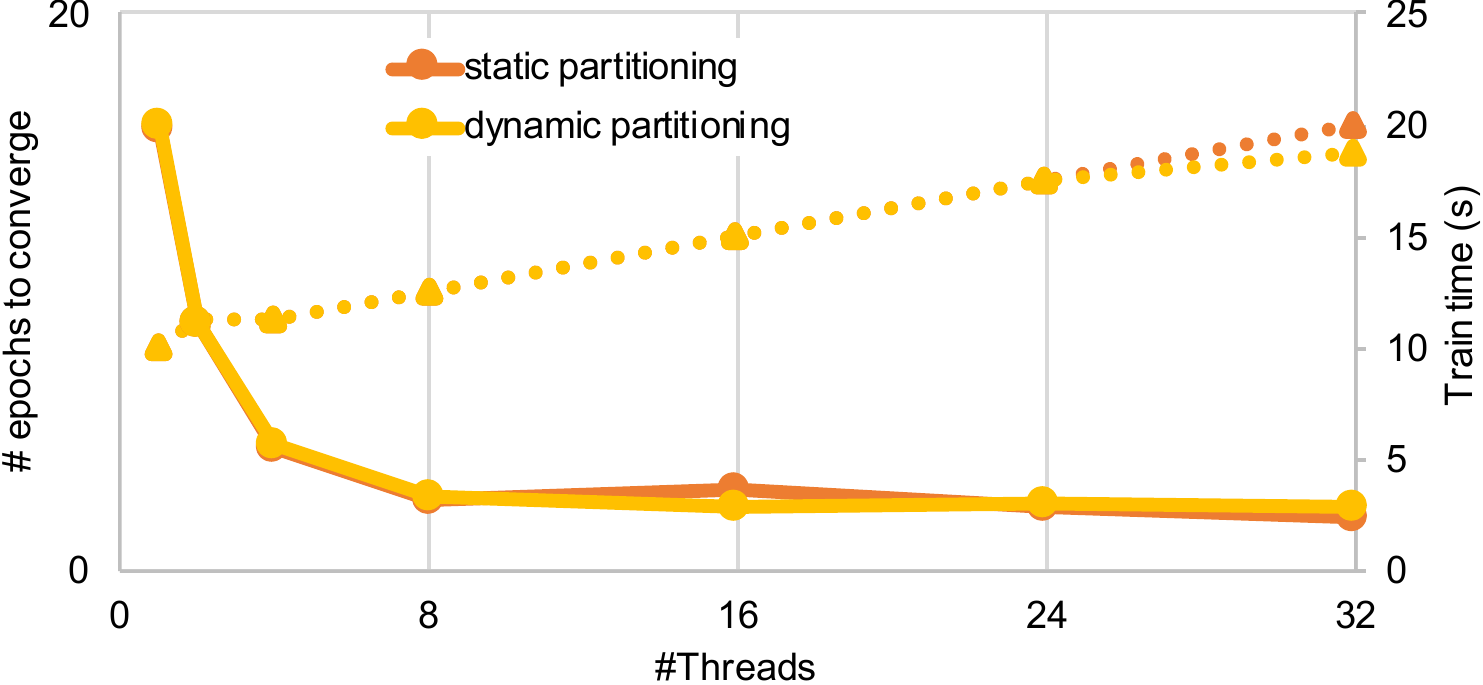}
    \label{fig:appx:higgs:shuffle:x86}
  }
  \subfloat[epsilon] {
    \includegraphics[width=.48\linewidth]{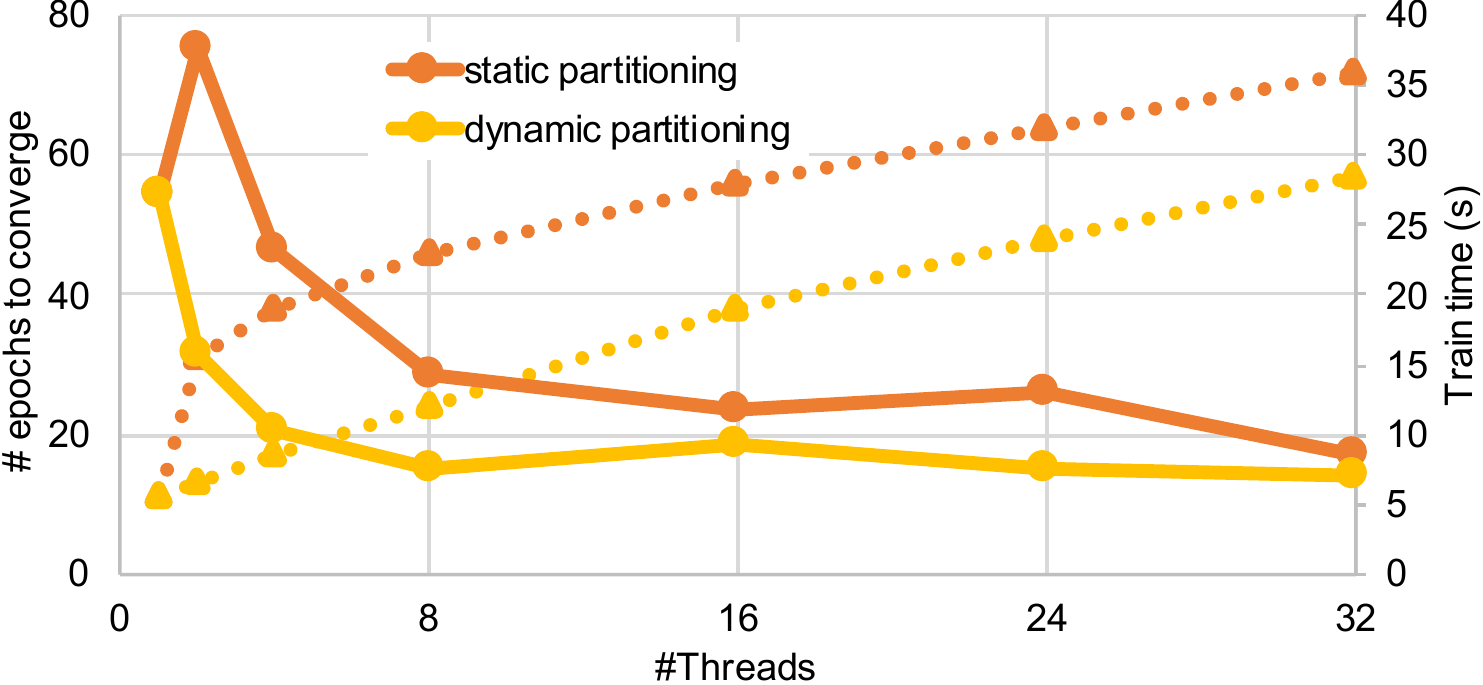}
    \label{fig:appx:epsilon:shuffle:x86}
  }
  \caption{Static and dynamic partitioning: Gain achieved by dynamic re-partitioning on the higgs and epsilon datasets on the 4 node system. Solid lines indicate time, and dashed-lines depict number of epochs.\vspace{-0.4cm}}
  \label{fig:appx:shuffle:x86}
\end{figure*}

\begin{figure*}[t!]
  \subfloat[higgs] {
    \includegraphics[width=.48\linewidth]{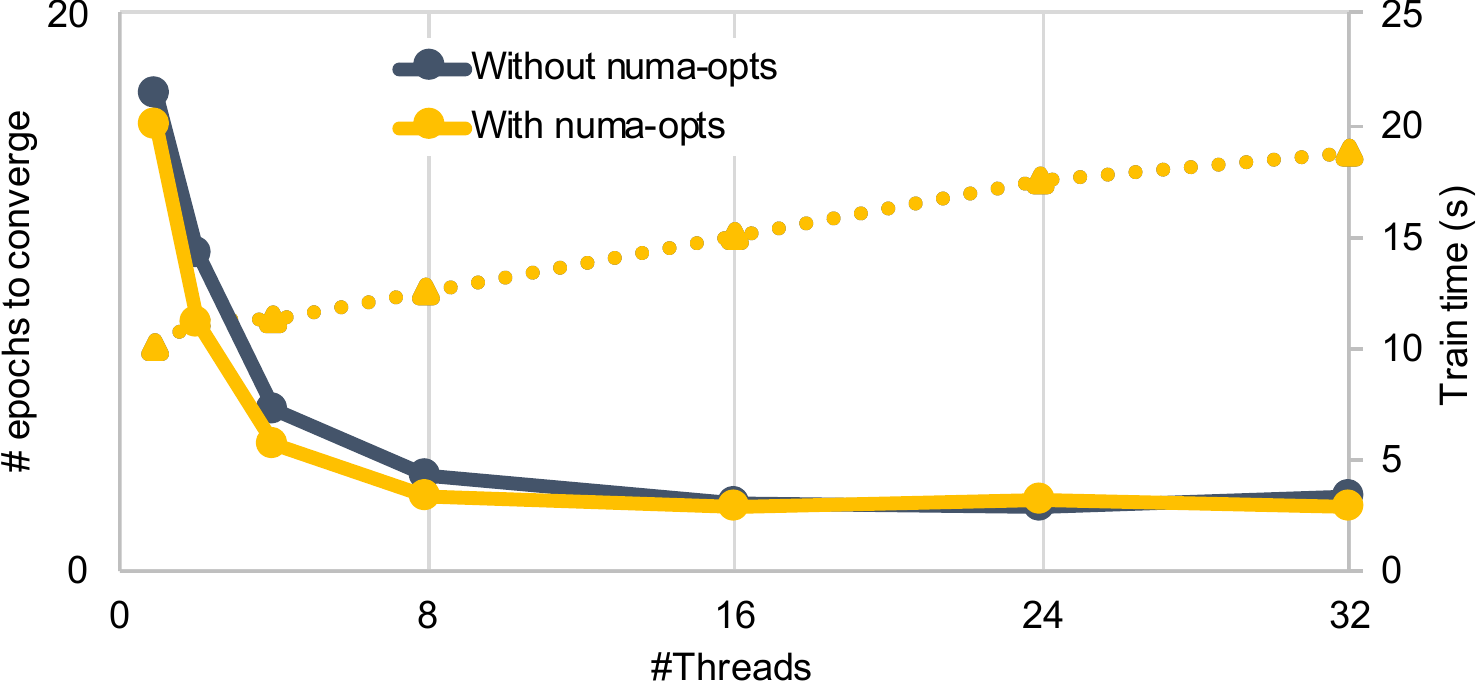}
    \label{fig:appx:higgs:numa:x86}
  }
  \subfloat[epsilon] {
    \includegraphics[width=.48\linewidth]{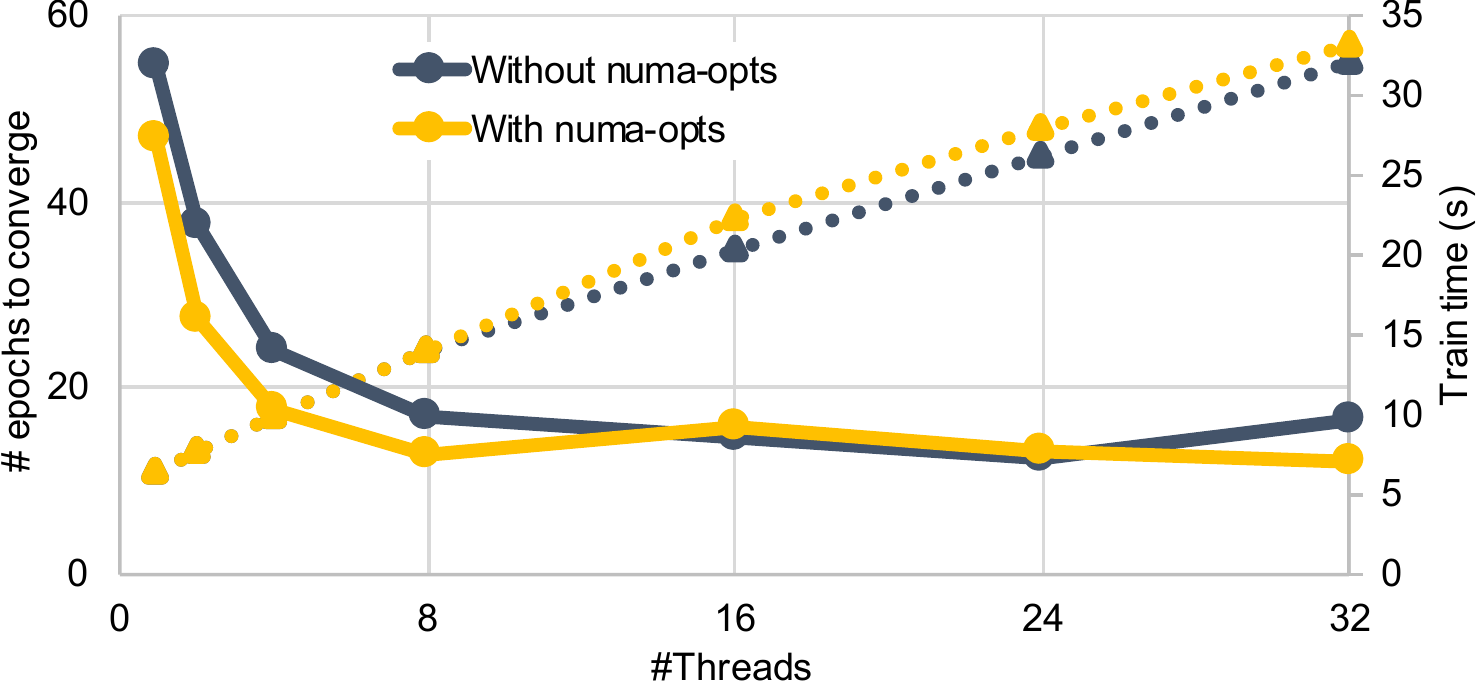}
    \label{fig:appx:epsilon:numa:x86}
  }
  \caption{Numa-level Optimizations:  Gain achieved by numa-awareness on the higgs and epsilon datasets on the 4 node system. Solid lines indicate time, and dashed-lines depict number of epochs.\vspace{-0.4cm}}
  \label{fig:appx:numa:x86}
\end{figure*}

\begin{figure*}[t!]
  \center
  \subfloat[x86\_64] {
    \includegraphics[width=.48\linewidth]{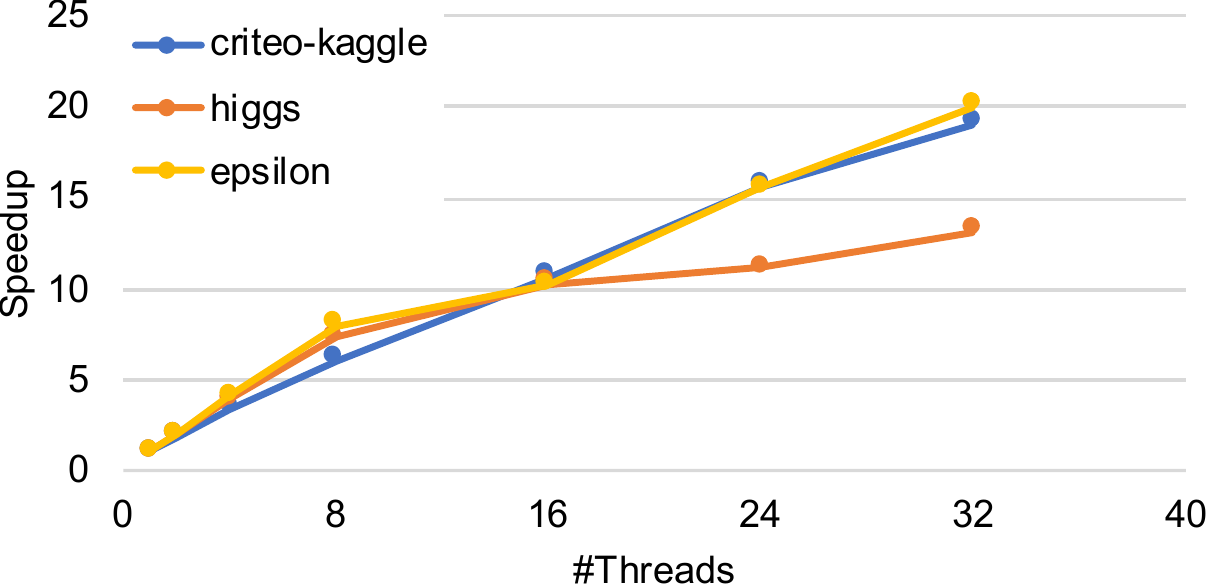}
  }
  \subfloat[P9] {
    \includegraphics[width=.48\linewidth]{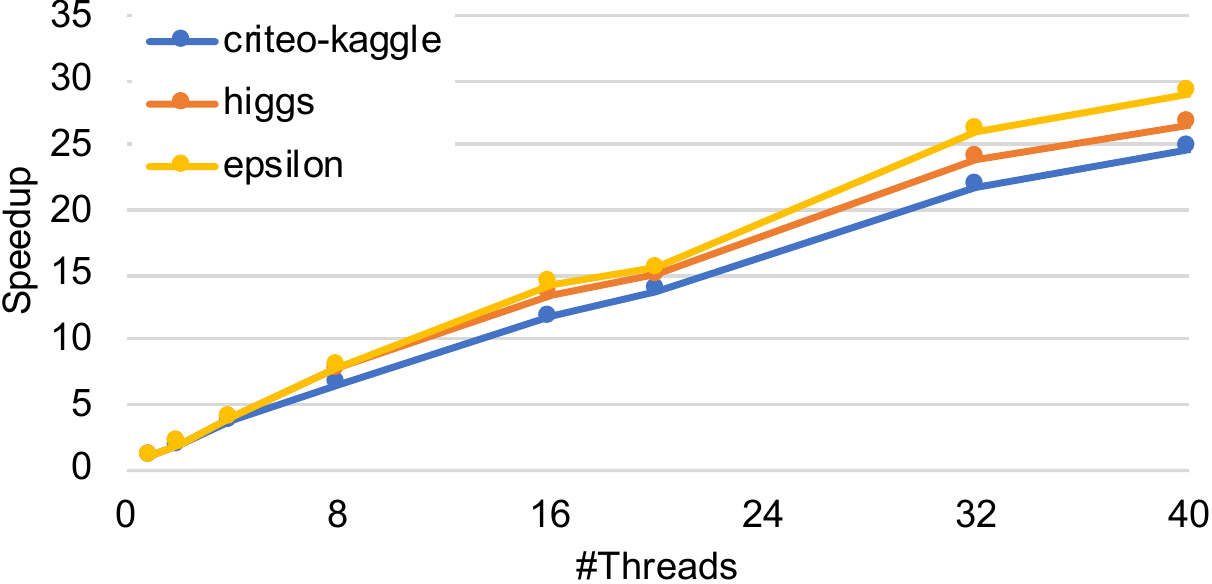}
  }
  \caption{Strong thread scalability of our \syscd implementation w.r.t training time per epoch with increasing thread counts.}
  \label{fig:scaling-epsilon}
\end{figure*}

In Section \ref{sec:cpu-opts} we have presented and evaluated the individual algorithmic components of \syscd on the criteo-kaggle dataset. In the following we will evaluate the individual optimizations on two additional datasets:  the dense HIGGS dataset~\citep{higgs14nature}, and the epsilon dataset from the PASCAL Large Scale Learning Challenge \citep{epsilondataset} (epsilon). 
See Fig~\ref{fig:appx:bucketsize:higgs:x86} for the effect of using buckets on higgs. 
For the epsilon dataset the bucket optimization is not applied due to the small model vector ($2000$ elements) that fits in CPU cache. 
See Fig~\ref{fig:appx:shuffle:x86} for the effect of dynamic re-partitioning. We observe that for the higgs dataset, reshuffling gives diminishing return. This is because the data is very imbalances and has a lot of redundancy in the examples. Thus a small subset is representative of the full data and reshuffling does not add much.
See Fig~\ref{fig:appx:numa:x86} for the effect of the numa-level optimizations. 
For the accumulated gains see Sec.~\ref{app:bottomline}.

\subsection{Scaling}
In Section \ref{eval:time-to-acc} we have demonstrated the thread scalability of our implementation on the criteo-kaggle and the higgs dataset. In Fig~\ref{fig:scaling-epsilon} we add one more dataset: the epsilon dataset \citep{epsilondataset}.
Performance scales almost linearly across datasets and the two systems.
Training on higgs on the 4 node machine is an exception: scaling gradually degrades when going from 1 to 2 and more numa nodes. By profiling its run-time, we observe that most of the time is spent on memory accesses to the training dataset. 
On the 2 node system, however, which has higher memory bandwidth and less numa nodes, those memory accesses are no longer the bottleneck.

In all our experiments we disable simultaneous multi-threading (SMT). Figure \ref{fig:scaling_smt4} illustrates how \syscd would scale when the number of threads exceeds the number of physical cores. In this example we enabled SMT4 (4 hardware threads per core) on the P9 machine and re-ran the experiment from Fig.~\ref{fig:mot:dense:scaling:x86:criteo} on the criteo-kaggle dataset.
We see linear scaling up to the number of physical CPU cores (in this case 40), after which we start to see diminishing returns due to the inherent inefficiency of SMT4 operation. We thus do not use SMT when deploying \syscd.

\begin{figure}[t!]
\begin{minipage}{\textwidth}
  \begin{minipage}[b]{0.48\textwidth}
\centering
  \includegraphics[width=\linewidth]{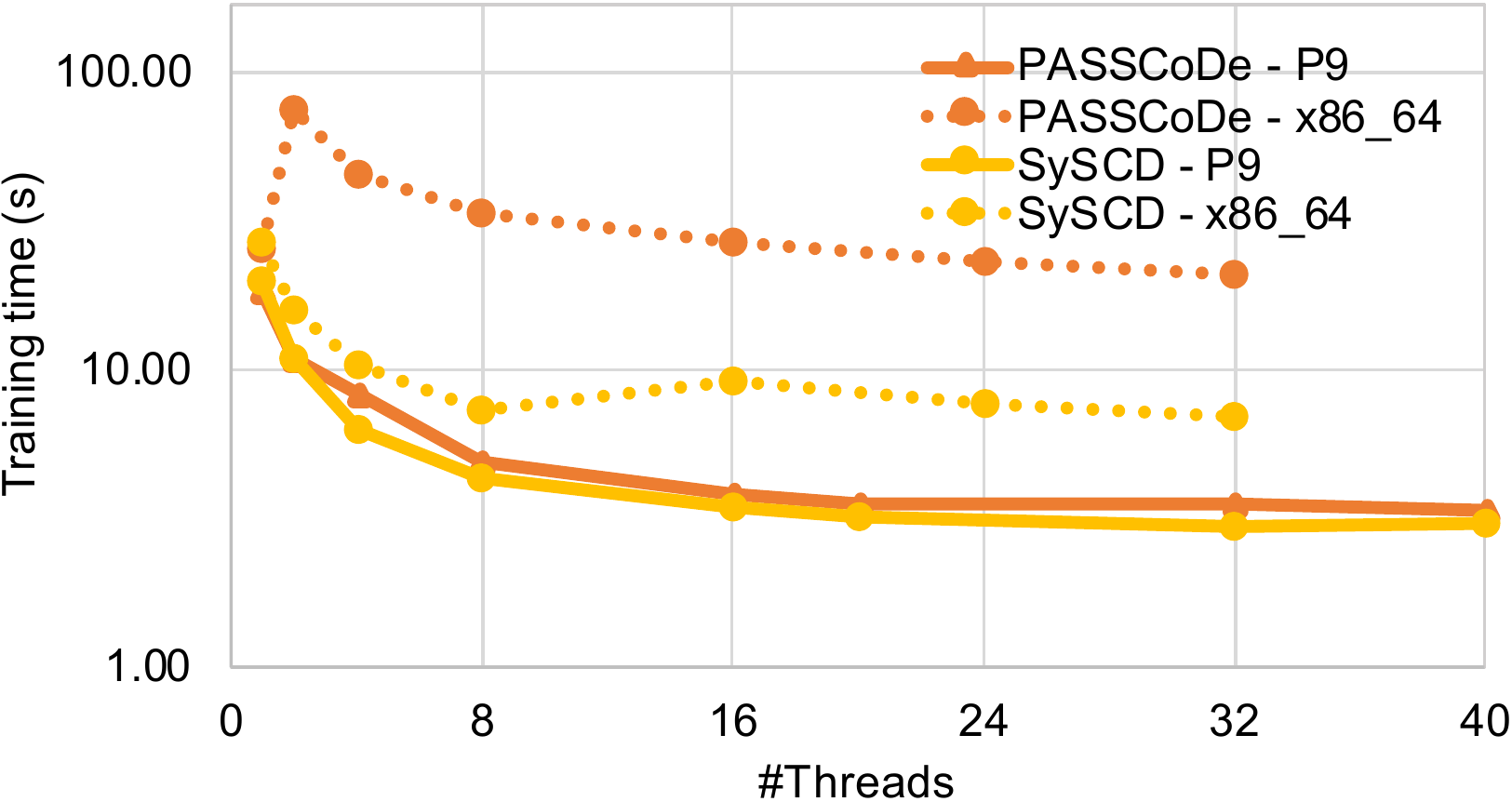}
  \caption{Training time w.r.t. thread count for the reference \passcode-wild and our optimized \syscd implementation on a 2 node (P9) and a 4 node (X86\_64) machine.}
    \label{fig:appx:tconv:epsilon:x86}
\end{minipage}
\hfill
\begin{minipage}[b]{0.48\textwidth}
	  \centering
	 \includegraphics[width=\linewidth]{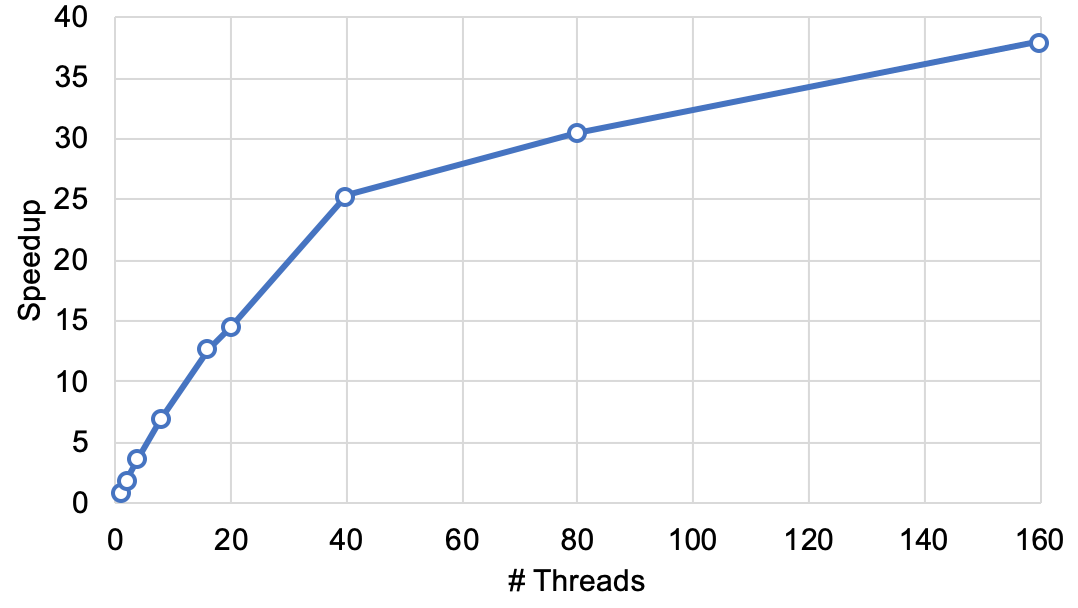}
     \caption{Scalability of \syscd  w.r.t runtime per epoch with increasing thread counts beyond number of physical cores on the P9 machine in SMT4 mode for the criteo-kaggle dataset. }
    \label{fig:scaling_smt4}
\end{minipage}
\end{minipage}
  \end{figure}

\subsection{Bottom Line Performance}
\label{app:bottomline}

The comparison of \syscd and our baseline implementation \passcode-wild on the epsilon dataset is shown in Fig~\ref{fig:appx:tconv:epsilon:x86}. It shows the accumulated gain of all the optimizations.

\subsection{Benchmarking Results}
Results in Fig~\ref{fig:appx:ttacc} augment the results of Section~\ref{eval:sklearn} where we compare the performance of \syscd to different GLM solvers. Here we use the epsilon dataset for benchmarking. 
Also for this dataset \textit{\syscd MT} is significantly than all alternative solver;  $\times7.3$ faster on the 4 node system and $\times41.9$ faster on the 2-node system. We observe that for the epsilon dataset H2O is the slowest competitor.
The performance of H2O is somewhat extreme: its multi-threaded solver takes over all the cores in the system and is able to achieve the expected test loss, but the performance varies dramatically across datasets. 
We expect this to be an issue with large number of features (epsilon has $2k$): by artificially reducing the number of features to 200 using the \texttt{max\_active\_predictors} H2O parameter, we get an order of magnitude speedup in time, with, however, a dramatic degradation of the test loss.

\begin{figure*}[t!]
  \subfloat[epsilon - x86\_64] {
    \includegraphics[width=.48\linewidth]{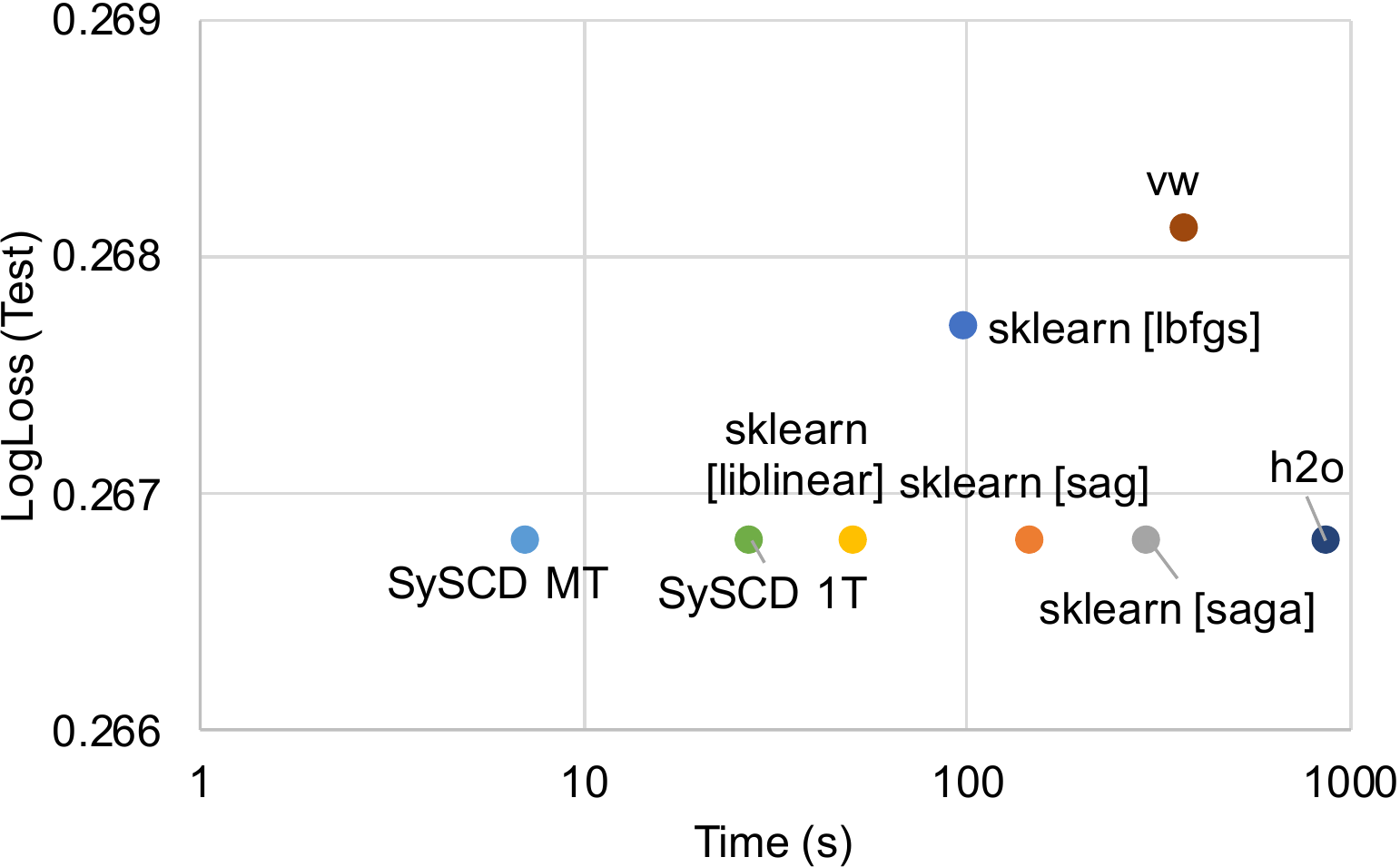}
    \label{fig:ttacc:epsilon:x86}
   }
  \subfloat[epsilon - P9] {
    \includegraphics[width=.48\linewidth]{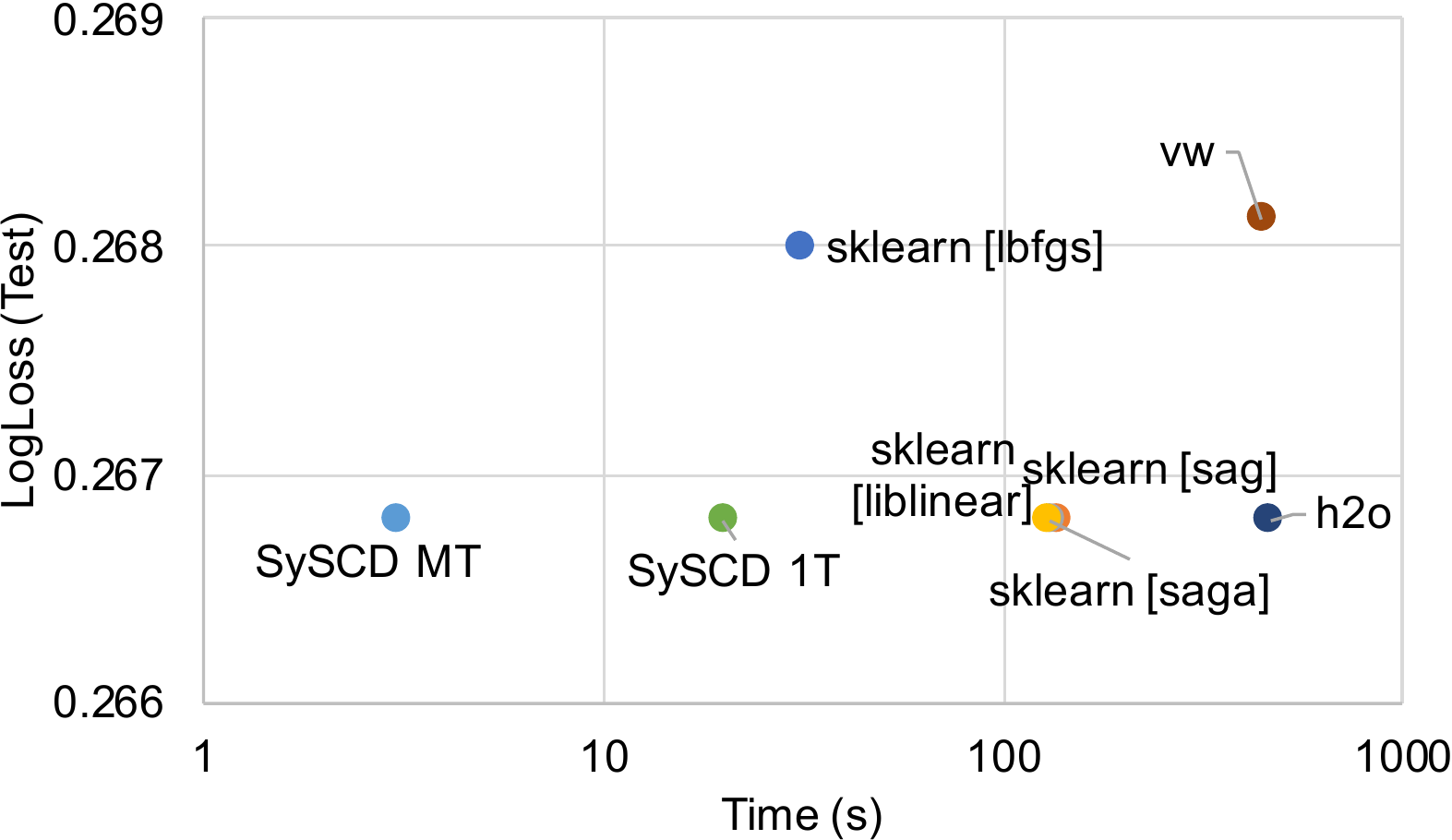}
    \label{fig:ttacc:epsilon:P9}
   }
  \caption{Comparing our single- and multi-threaded implementations against different solvers in scikit-learn, VW, and H2O.}
  \label{fig:appx:ttacc}
\end{figure*}

\newpage
\section{Experimental Details for Reproducability}
\label{app:exp}

All experiments performed in this paper show the training of an $L_2$-regularized logistic regression classifier. This means the objective \eqref{eq:obj} is defined as

\begin{equation}
\min_\alphav \sum_i \log(1+\exp(y_i \xv_i^\top \alphav))+\frac \lambda 2 \|\alphav\|^2
\end{equation}

where $\xv_i$ denote the columns of $A$, corresponding to training samples and $y_i\in\{-1,1\}$ denote the corresponding labels. The regularization parameter was verified to be a reasonable choice through cross validation;  $\lambda=2$ for the criteo dataset, $\lambda=1$ for the higgs dataset and $\lambda=1$ for the epsilon dataset.

\subsection{Datasets}
All datasets used for performance evaluation are publicly available and downloadable from the libsvm repository \cite{libsvm}. Data was used as provided without any additional preprocessing.
The data used for the motivational example in Fig~\ref{fig:mot:example} is a synthetic dense dataset. It was created by sampling the individual elements of the data matrix $A$ uniformly at random from the interval $[0,1]$. The labels where also chosen uniformly at random from $\{-1,1\}$. 

\begin{table}[h!]
\centering
\begin{tabular}{l | ccc}
	\hline
		dataset     &\#examples  &\#features \\
		\hline
		higgs&11'000'000&28\\
		epsilon&400'000&2'000\\
		criteo-kaggle&45'840'617&1'000'000\\
		synthetic dense (Fig~\ref{fig:mot:example})&100'000&100\\
		synthetic sparse (Fig~\ref{fig:mot:example:sparse})&100'000&1'000\\
		\hline
	\end{tabular} 
\end{table}

\subsection{Infrastructure}
We use two systems: a 4-node Intel Xeon (E5-4620) with 8 cores and 128GiB of RAM in each node, and a 2-node IBM POWER9 with 20 cores and 512GiB in each node, 1TiB total.
We disable simultaneous multi-threading (hyper-threading on the Xeons, SMT on the P9s)
and fix the CPU frequency to the maximum supported (2.2GHz for x86, and 3.8GHz for P9).

\subsection{Implementation Details}

\textit{\passcode-wild.} The procedure of the \passcode-wild baseline is given in Algorithm~\ref{alg:a-sdca}. The algorithm is implemented in OpenMP.

\noindent
\textit{mini-batch SDCA.} Claims about mini-batch SDCA\citep{richtarik16mp} are based on experimental results obtained using the publicly available code\footnote{\url{https://code.google.com/archive/p/ac-dc/}} by the authors. Functionalities to generate and load data have been added.

\noindent
\textit{\cocoa.} \cocoa is implemented across threads, where each thread represents a worker in \cocoa.  We use $\sigma'=K$ and $\gamma=1$ as suggested by the authors \cite{cocoa18jmlr}. 

\noindent 
\textit{H2O.} We used the python package version 3.20.0.8. provided by \cite{h2o}. We use the multi-threaded \texttt{auto} solver. H2O could not work directly with the \texttt{numpy} binary files containing the training and test examples for each dataset, so we first convert them to \texttt{csv} files; directly converting them to \texttt{H2OFrames} took too long (hours) for the large datasets.

\noindent
\textit{VW.}  We use the code provided at \citep{vowpal-wabbit} with its default solver.

\noindent
\textit{\syscd.} Our new algorithm is available as part of the Snap ML software package \citep{snapml18nips} and example of how to use it can be found in the documentation.

\end{document}